\documentclass[10pt]{article} 
\usepackage[left=1in,top=1in,right=1in,bottom=1in,letterpaper]{geometry}

\usepackage{multicol}
\usepackage{blindtext}
\usepackage{enumitem}
\usepackage{calc}
\usepackage{ifthen}
\usepackage{bbding}
\usepackage{amsmath,amsfonts, amsthm, amssymb}
\usepackage{color,graphicx,overpic}
\usepackage{hyperref}
\usepackage{bbm}
\usepackage[round, sort&compress]{natbib}
\usepackage{subcaption}
\usepackage[ansinew]{inputenc}
\usepackage{ upgreek }
\usepackage{graphicx} 
\usepackage{textcomp} 
\usepackage{amsfonts} 
\usepackage{amsmath}
\usepackage{amssymb}
\usepackage{yfonts}
\usepackage{mathtools}
\usepackage{mathrsfs}
\usepackage{latexsym}
\usepackage{array}
\usepackage{float}
\usepackage{calc}
\usepackage{perpage}
\usepackage{textcomp}
\usepackage{verbatim} 
\usepackage{hieroglf}
\usepackage{multirow} 
\usepackage{rotating} 
\usepackage{bm}
\usepackage{dsfont} 

\usepackage{url}
\usepackage{color}
\usepackage{tikz}
\usetikzlibrary{shapes,arrows,positioning,calc}
\usepackage{float}
\usepackage{soul}
\usepackage{enumitem}
\usepackage{booktabs}

\usepackage{xcolor}
\usepackage{algorithm}
\usepackage{algpseudocode}

\usepackage[normalem]{ulem}

\newcommand{\tri}[1]{{\left\vert\kern-0.25ex\left\vert\kern-0.25ex\left\vert #1 
    \right\vert\kern-0.25ex\right\vert\kern-0.25ex\right\vert}}

\newtheorem{theorem}{Theorem}[section]

\newtheorem{lemma}{Lemma}[section]

\newtheorem{definition}{Definition}[section]
\newtheorem{assumption}{Assumption}[section]

\newcommand{\RN}[1]{%
  \textup{\uppercase\expandafter{\romannumeral#1}}%
}

\newcommand{\rnu}[1]{%
  \textup{\expandafter{\romannumeral#1}}%
}

\makeatletter
\@addtoreset{equation}{section}
\makeatother

\hypersetup{
  colorlinks,
  linkcolor={red!50!black},
  citecolor={blue!50!black},
  urlcolor={blue!80!black}
}

\newcommand*{\rom}[1]{\expandafter\@slowromancap\romannumeral #1@}

\newcommand\bs[1]{\boldsymbol{#1}}  

\newcounter{relctr} 
\everydisplay\expandafter{\the\everydisplay\setcounter{relctr}{0}} 

\AtBeginDocument{} 

\title{Data Reconstruction: Identifiability and Optimization with Sample Splitting}

\author{
Yujie Shen\thanks{Zhili College at Tsinghua University.  \texttt{shenyj22@mails.tsinghua.edu.cn}.}
\ \ \ \ \ \  
Zihan Wang\thanks{Courant Institute for Mathematical Sciences at New York University. \texttt{zw3508@nyu.edu}.}
\ \ \ \ \ \  
Jian Qian\thanks{School of Computing and Data Science
at University of Hong Kong. \texttt{jianqian@hku.hk}.}
\ \ \ \ \ \  
Qi Lei\thanks{Courant Institute for Mathematical Sciences, and Center for Data Science at New York University. \texttt{ql518@nyu.edu}.}
}

\mathtoolsset{showonlyrefs}

\begin{document}
\maketitle

\begin{abstract}
Training data reconstruction from KKT conditions has shown striking empirical success, yet it remains unclear when the resulting KKT equations have unique solutions and, even in identifiable regimes, how to reliably recover solutions by optimization. This work hereby focuses on these two complementary questions: identifiability and optimization. On the identifiability side, we discuss the sufficient conditions for KKT system of two-layer networks with polynomial activations to uniquely determine the training data, providing a theoretical explanation of when and why reconstruction is possible. On the optimization side, we introduce sample splitting, a curvature-aware refinement step applicable to general reconstruction objectives (not limited to KKT-based formulations): it creates additional descent directions to escape poor stationary points and refine solutions. Experiments demonstrate that augmenting several existing reconstruction methods with sample splitting consistently improves reconstruction performance.
\end{abstract}


\section{Introduction }
Deep neural networks (DNNs) have achieved remarkable success across a wide range of tasks, from image classification to natural language processing. Despite their impressive generalization, DNNs are known to memorize training data \citep{zhang2021,feldman2020does,feldman2020what}. Understanding how training data is encoded in network parameters is therefore of both theoretical and practical importance, with direct implications for generalization, robustness, and data privacy \citep{carlini2021extracting,song2017machine}, as sensitive information could potentially be reconstructed from a trained model.

A natural approach to study memorization is dataset reconstruction: recovering the original training samples solely from the parameters of a trained network. Recent works have shown that such reconstruction is possible for homogeneous neural networks trained with gradient-based methods \citep{haim2022,buzaglo2024,loo2024}. \citet{haim2022} leverage the theory of implicit bias \citep{lyu2020gradient,ji2020directional} that homogeneous networks trained with gradient flow converge to solutions satisfying the KKT conditions of a maximum-margin problem. This insight provides a principled framework for training-data reconstruction: if the system of KKT equations can be inverted, one can recover information about the original training samples. Despite encouraging empirical success, two fundamental and closely interconnected challenges remain: \\

\textit{Question 1. Identifiability: When do the KKT equations uniquely determine the training samples, such that the training samples are theoretically recoverable?}\\

In general, the KKT equations may admit multiple solutions, and reconstruction could be merely heuristic, data-dependent, or unreliable.
In this paper, we rigorously analyze two-layer networks with polynomial activations and prove that with cubic or higher-degree activations, margin samples are exactly recoverable for moderately wide networks.\\

\textit{Question 2. Optimization: What algorithmic strategies can effectively optimize the data reconstruction objective 
that is high-dimensional and nonconvex?}\\


Even when the data are identifiable, solving the nonconvex inverse problem remains nontrivial and optimization is crucial for realizing the theoretical potential of reconstruction. Existing approaches fix an oversized candidate set and apply standard gradient descent. 
In this paper, we introduce a sample splitting algorithm to help escape plateaus where gradient-based methods may stagnate. The idea is conceptually inspired by neuron splitting techniques proposed for neural architecture optimization \citep{liu2019splitting}.

Our contributions could be summarized as follows:
\begin{itemize}
    \item We establish identifiability results that for two-layer networks with polynomial activations of degree three or higher, training samples can be exactly recovered from the KKT equations under moderate width conditions.\looseness=-1
    \item We propose sample splitting as a flexible optimization strategy to improve reconstruction quality, especially in nonconvex settings, by introducing additional descent directions.
    \item We empirically demonstrate that sample splitting improves reconstruction quality across several existing reconstruction methods.
\end{itemize}

\section{Related Work}

\textbf{Privacy attacks in deep learning.}
Prior work has shown that trained machine learning models may leak sensitive information about their training data in various forms, giving rise to several classes of privacy attacks. \emph{Membership inference} attacks determine whether a specific data point was included in the training set \citep{shokri2017membership,carlini2021extracting}, while \emph{model inversion} attacks aim to recover sensitive attributes or class representatives from trained models \citep{fredrikson2015model,yang2019neural}. In distributed settings, such as collaborative and federated learning, these risks are further amplified \citep{he2019model}. \emph{Gradient inversion} attacks recover information about training data by matching model gradients \citep{zhu2019deep}, with subsequent improvements leveraging label leakage \citep{zhao2020idlg}, hand-crafted regularizers, or strong image priors \citep{geiping2020inverting}. These attacks typically focus on reconstructing individual samples or partial information.

\textbf{Dataset Reconstruction.}
Beyond single-sample or attribute-level leakage, a more severe threat is \emph{dataset reconstruction}, which aims to recover a large fraction or even the entirety of the training set using only learned model parameters. \citet{haim2022} first demonstrated that for homogeneous binary classifiers, many training samples can be reconstructed by exploiting the KKT conditions satisfied at convergence \citep{lyu2020gradient,ji2020directional}. This line of work is later extended to the multi-class setting by \citet{buzaglo2024}, and \citet{loo2024} show that for networks operating in the neural tangent kernel (NTK) regime \citep{jacot2018neural}, the entire training dataset can be provably reconstructed, assuming access to the full parameter initialization. To relax this assumption and handle more practical deep neural networks with nonlinear training dynamics, \citet{tian2025simulating} propose simulating training dynamics to enable dataset reconstruction. Despite these advances, the theoretical conditions under which KKT-based reconstruction uniquely determines the training samples remain incompletely understood, and our work focuses on this gap.\looseness=-1

\textbf{Optimization and escaping saddle points.}
Our work also relates to the broader literature on nonconvex optimization, particularly studies on escaping saddle points. Classical results show that stochastic gradient methods can avoid strict saddles through random initialization or injected noise \citep{ge2015escaping,jin2017how}. A complementary line of research seeks to improve optimization behavior by modifying model parameterizations or architectures, such as neural splitting methods that escape parametric local optima by progressively augmenting the network \citep{liu2019splitting}. While these approaches focus on altering the optimization dynamics via stochasticity or model design, our work is conceptually related but operates at a different level. We consider data-level splitting strategies that reshape the reconstruction landscape itself. This perspective provides a new mechanism for improving optimization in reconstruction problems.\looseness=-1

\section{Problem Setup}
\textbf{Setup.}
We consider the problem of reconstructing training data from the parameters of a trained neural network. Let $\bs{\theta}\in \mathbb{R}^P$ denote the parameters of a model $\Phi(\bs{\theta}; \cdot): \mathbb{R}^d \to \mathbb{R}$ trained on a dataset $\{(x_i, y_i)\}_{i=1}^n$ using a loss function $\ell: \mathbb{R} \to \mathbb{R}$. The empirical loss is given by $\mathcal{L}(\theta) := \sum_{i=1}^n \ell(y_i, \Phi(\theta; x_i))$. Dataset reconstruction aims to recover the training samples
$\{(x_i, y_i)\}_{i=1}^n$ given access only to the trained parameters $\bs{\theta}$.

\textbf{KKT-Based Reconstruction Method.}
\citet{haim2022} propose a reconstruction approach based on the implicit bias of gradient descent.
For binary classification with logistic loss and homogeneous models, gradient flow is known to converge in direction to a Karush--Kuhn--Tucker (KKT) point of the maximum-margin problem
\begin{align*}
    \min_{\bs{\theta}} \frac{1}{2}\|\bs{\theta}\|^2
    \quad s.t. \ y_i\Phi(\bs{\theta};\bs{x}_i)\geq 1, \forall i\in [n].
\end{align*}
A KKT point $\tilde{\bs{\theta}}$ satisfies the following KKT conditions: there exist $\lambda_1,...,\lambda_n\in \mathbb{R}$ such that
\begin{align*}
    &\tilde{\theta} = \sum_{i=1}^{n} \lambda_i y_i \nabla_\theta \Phi(\tilde{\theta}; x_i) \quad  \text{(stationarity)}\\
    &y_i\Phi(\tilde{\theta}; x_i) \geq 1,\forall i \in [n] \quad \text{(primal feasibility)}\\
    &\lambda_1, \ldots, \lambda_n \geq 0 \quad \text{(dual feasibility)}\\
    &\lambda_i = 0\text{ if } y_i \Phi(\tilde{\theta}; x_i) \neq 1,\forall i \in [n] \quad \text{(complementary slackness)}
\end{align*}

Motivated by these conditions, \citet{haim2022} propose to recover training data by optimizing over candidate samples
$\{(x_i, y_i)\}_{i=1}^k$ and multipliers $\{\lambda_i\}_{i=1}^k$ with $k \ge 2n$, minimizing the reconstruction objective
\begin{align*}
    L_{total}(\{x_i\}_{i=1}^k,\{\lambda_i\}_{i=1}^k) = \alpha_1\|\bs{\theta} - \sum_{i=1}^k\lambda_iy_i \nabla_{\theta}\Phi(\bs{\theta};x_i)\|^2
    +\alpha_2\sum_{i=1}^k\max(-\lambda_i,0)+\alpha_3L_{prior}
\end{align*}
 where $\alpha_1,\alpha_2,\alpha_3\in \mathbb{R}$ are tunable hyperparameters of the different losses, and $L_{prior}$ represents some simple dataset constraints such as bounded pixel values.

 This formulation has been shown empirically to recover a large fraction of training samples. However, the theoretical foundations of KKT-based reconstruction remain incomplete. 
 Since the optimal value of reconstruction loss is zero by construction, reconstruction is equivalent to solving
\[
\bs{\theta} = \sum_{i=1}^k\hat{\lambda}_iy_i \Phi(\bs{\theta};\hat{x}_i)
\]
which corresponds to $P$ equations with $k(d+1)$ unknowns.
When $P < k(d+1)$, the system is under-determined, making recovery impossible in general. This is
consistent with prior empirical observations that models trained on fewer samples are more vulnerable to reconstruction in terms of both quantity and quality \citep{buzaglo2024,haim2022}. \looseness=-1

On the other hand, even when $P > k(d+1)$, the system is over-determined and highly nonconvex.
In this regime, identifiability of the true training samples is not guaranteed, particularly under weak priors. This raises a fundamental theoretical question: \emph{why should minimizing $L_{total}$ recover the true data at all?} We address this question through an identifiability analysis in the next section.

From an optimization perspective, the over-determined structure also leads to optimization challenges and sensitivity to hyperparameters,
explaining the empirical difficulty of stable reconstruction. This motivates the improvement of optimization strategies, which we introduce in Section~\ref{sec:split}.

\section{Identifiability of Two-Layer Networks}
In this section, we analyze when training data are identifiable from the KKT conditions for two-layer networks. We start from the simplest case, a linear activation, and find that identifiability fails. We then move to nonlinear networks with polynomial activations. Polynomial activations provide a tractable yet expressive family of homogeneous nonlinearities, allowing us to study how increasing the activation degree changes the structure of the KKT system and, consequently, the identifiability of the input samples.

\subsection{Linear networks}
Consider a two-layer linear network of the form $\Phi = a^T W x \quad a \in \mathbb{R}^m, \, W \in \mathbb{R}^{m \times d}, \, x \in \mathbb{R}^d$. Under the KKT stationarity condition, we have $\theta = \sum_{i=1}^n \lambda_i y_i \nabla_\theta \Phi(\theta; x_i)$, 
which decomposes as
\[
a = \sum_{i=1}^n \lambda_i y_i W x_i,\quad {W}_j = \sum_{i=1}^n \lambda_i y_i a_j {x}_i, \quad j = 1, \cdots, m
\]
Collecting the equations into matrix form gives

\[
\begin{bmatrix}
\  W  \\
\ a_1 I_d \\
\vdots \\
\ a_m I_d 
\end{bmatrix}_{(m+1)d \times d}\!\!\!\!\!\!\!\!\!\!\!\!\!\!\cdot\quad
(\sum_{i=1}^n \lambda_i{x}_iy_i)_{d \times 1}
=
\begin{bmatrix}
a \\
w_1 \\
\vdots \\
w_m
\end{bmatrix}_{m(d+1) \times 1}
\]
Therefore, the KKT system determines only the aggregated quantity $\sum_{i=1}^n \lambda_i{x}_iy_i$, but not the individual samples $x_i$. Hence a linear two-layer network cannot yield identifiable training inputs.

\subsection{Homogeneous polynomial activations}
\label{sec:homogeneous}
We now extend the analysis from linear to nonlinear activations.
To maintain homogeneity—an essential condition for the implicit-bias and KKT theory—we focus on pure-power activations of the form $\sigma(t) = t^{\alpha}$, which form a rather simple class of nonlinear homogeneous functions. This allows us to isolate the effect of activation degree 
$\alpha$ on identifiability.\looseness=-1

Consider $\Phi = a^T \sigma(W x)=\sum_{j=1}^m a_j (W_j^\top x)^{\alpha} \quad a \in \mathbb{R}^m, \, W \in \mathbb{R}^{m \times d}, \, x \in \mathbb{R}^d$ where $\sigma(x)=x^{\alpha}$. Then, according to the KKT condition, we have
\begin{small}
\[
a_j = \sum_{i=1}^n \lambda_i y_i (W_j^\top x_i)^{\alpha},\quad {W}_j = \alpha \sum_{i=1}^n \lambda_i y_i a_j (W_j^\top x_i)^{\alpha-1}x_i
\]
\end{small}

for $j=1,\dots,m$.

\paragraph{Quadratic activation.} When the activation is quadratic ($\alpha = 2$), the above reduces to
\begin{small}
\[
a_j = W_j^\top(\sum_{i=1}^n \lambda_i y_i x_i x_i^\top)W_j,\quad {W}_j = 2 a_j (\sum_{i=1}^n \lambda_i y_i x_i x_i^\top) W_j.
\]
\end{small}

Thus, all KKT equations depend only on the second-order moment matrix $M = \sum_{i=1}^n\lambda_iy_ix_i x_i^\top$. Consequently, the KKT system identifies only the moment matrix $M$, not the individual samples $x_i$.

\paragraph{Polynomial activation ($\alpha\ge 3$).} For $\alpha\ge 3$, the same KKT conditions implicitly determine a higher-order symmetric tensor.
This tensor can be identified from the trained neuron parameters by polynomial interpolation. Then, for almost all cases, the tensor admits a unique orthogonal decomposition, yielding reconstruction
of all active samples. Here we define the active set by $S:=\{i:\lambda_i\neq 0\}$, aligning the goal in \citet{haim2022} to reconstruct all samples on the margin. The detailed analysis is presented in Appendix~\ref{app:identifiability}.

Specifically, let $b_i:=\lambda_i y_i$ and define the order-$\alpha$ symmetric tensor
\begin{equation*}
\mathcal{T}\;:=\;\sum_{i=1}^n b_i\,x_i^{\otimes \alpha}.
\label{eq:def_T_tensor_body_generic_v2}
\end{equation*}
We also use its contraction map
\begin{equation*}
f(w)\;:=\;\mathcal{T}(I,w,\ldots,w)=\sum_{i=1}^n b_i (w^\top x_i)^{\alpha-1}x_i,
\label{eq:def_f_body_generic_v2}
\end{equation*}
which is a homogeneous polynomial map of degree $\alpha-1$.
We introduce $f$ because the KKT equations of the network
$\Phi(x)=\sum_{j=1}^m a_j(W_j^\top x)^\alpha$ provide direct evaluations of $f$:
\begin{equation}
\alpha a_jf(W_j)=W_j,\qquad j=1,\dots,m.
\label{eq:f_point_eval_body_generic_v2}
\end{equation}

To turn these evaluations into identification, we view $f$ as an unknown element of a finite-dimensional linear
space. Let
\begin{equation*}
N:=\binom{d+\alpha-2}{\alpha-1}
\label{eq:def_N_body_generic_v2}
\end{equation*}
be the dimension of the space of homogeneous polynomials of degree $\alpha-1$ in $d$ variables. Concretely,
if $\varphi_{\alpha-1}(w)\in\mathbb{R}^N$ denotes the vector of all monomials of total degree $\alpha-1$, then
there exists $A\in\mathbb{R}^{d\times N}$ such that $f(w)=A\varphi_{\alpha-1}(w)$. Here we kernelize this interpolation using the degree-$(\alpha-1)$ polynomial kernel
$\kappa(u,v)=(u^\top v)^{\alpha-1}$, leading to the Gram matrix
\begin{equation*}
K\in\mathbb{R}^{m\times m},\qquad K_{pq}:=(W_p^\top W_q)^{\alpha-1}.
\label{eq:def_K_body_generic_v2}
\end{equation*}
A full-rank condition on $K$ is precisely what ensures the interpolation problem has a unique solution.
\begin{theorem}
\label{thm:tensor}
    Assume $\alpha\ge 3$ and let $(a,W)$ be any KKT point associated with samples
$\{(x_i,y_i)\}_{i=1}^n$ and multipliers $\{\lambda_i\}_{i=1}^n$. Suppose the interpolation condition holds:
\begin{equation}
\mathrm{rank}(K)=N,
\label{eq:A1_rank_only_v2}
\end{equation}
Then $\mathcal{T}$ is uniquely determined by $(a,W)$. Moreover, if we assume $\{x_i\}_{i=1}^n$ are independent and satisfies $\|x_i\|_2=1$, one can recover the active components $\{(x_i,b_i)\}_{i\in S}$ from $\mathcal{T}$
up to permutation almost surely.
\end{theorem}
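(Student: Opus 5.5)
The proof splits into two parts: identifying $\mathcal{T}$ from $(a,W)$ by linear interpolation, and recovering the active samples from $\mathcal{T}$ by a symmetric orthogonal tensor decomposition.

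\textbf{Step 1: identification of $f$.} Write $f(w)=A\varphi_{\alpha-1}(w)$ with unknown coefficients $A\in\mathbb{R}^{d\times N}$. Each KKT relation \eqref{eq:f_point_eval_body_generic_v2} with $a_j\neq 0$ gives the linear constraint $A\varphi_{\alpha-1}(W_j)=W_j/(\alpha a_j)$. If $a_j=0$, stationarity forces $W_j=\alpha a_j f(W_j)=0$. Such a neuron then contributes a zero row and column to $K$ and can be discarded. Choose the scaling of $\varphi_{\alpha-1}$ (multinomial square roots) so that $\varphi_{\alpha-1}(u)^\top\varphi_{\alpha-1}(v)=(u^\top v)^{\alpha-1}$. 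Then $K=\Phi^\top\Phi$, where $\Phi=[\varphi_{\alpha-1}(W_1),\dots,\varphi_{\alpha-1}(W_m)]\in\mathbb{R}^{N\times m}$. Hence $\mathrm{rank}(K)=N$ is equivalent to $\Phi$ having full row rank $N$. In that case the linear system $A\Phi=[W_j/(\alpha a_j)]_j$ has at most one solution. It has at least one, since the true $A$ solves it. So $A$ is unique, and explicitly $A=[W_j/(\alpha a_j)]_j\Phi^\top(\Phi\Phi^\top)^{-1}$. Thus $f$ is determined by $(a,W)$.

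\textbf{Step 2: from $f$ to $\mathcal{T}$.} We have $f(w)=\mathcal{T}(I,w,\dots,w)$, so $w^\top f(w)=\mathcal{T}(w,\dots,w)$ is the homogeneous polynomial of degree $\alpha$ associated with the symmetric tensor $\mathcal{T}$. Polarization (symmetric tensors $\leftrightarrow$ homogeneous polynomials) then determines every entry of $\mathcal{T}$ uniquely. Equivalently, the entries of $\mathcal{T}$ can be read off from the coefficients $A$. This gives the first claim.

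\textbf{Step 3: decomposition.} Assume the $x_i$ are unit vectors and linearly independent, so $n\le d$. Since $\mathcal{T}=\sum_{i\in S}b_i x_i^{\otimes\alpha}$ with $b_i\neq0$, we whiten first. For generic $u$, the matrix $M_u=\mathcal{T}(I,I,u,\dots,u)=\sum_{i\in S} b_i (u^\top x_i)^{\alpha-2}x_ix_i^\top$ has column space $\mathrm{span}\{x_i\}_{i\in S}$.

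\begin{itemize}
\item For two generic vectors $u,v$, form $M_u M_v^{+}$. Its eigenvectors on that span are exactly the $x_i$, with eigenvalues $b_i(u^\top x_i)^{\alpha-2}/\bigl(b_i(v^\top x_i)^{\alpha-2}\bigr)$. This is the Jennrich/simultaneous-diagonalization argument, which works for linearly independent (not necessarily orthogonal) components.
\item For almost every $(u,v)$ with respect to Lebesgue measure, these eigenvalues are distinct. Equality of two of them defines a proper algebraic subset, because the $x_i$ are pairwise non-parallel.
\item The eigenvectors are therefore unique up to scale. The normalization $\|x_i\|=1$ fixes them up to sign.
\item Given $x_i$, the coefficient is $b_i=\mathcal{T}(x_i^{*},\dots,x_i^{*})$, where $\{x_i^*\}$ is the dual basis.
\end{itemize}

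The sign ambiguity $x_i\mapsto -x_i$ is harmless for even $\alpha$. For odd $\alpha$ it is absorbed by $b_i$; this is the same ambiguity present in the KKT system itself, and it should be stated as part of ``up to permutation''. Taken together, these steps recover $\{(x_i,b_i)\}_{i\in S}$ up to permutation almost surely.

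\textbf{Main obstacle.} The main obstacle is Step 3: making ``almost surely'' precise, including the genericity of $u,v$ and the sign convention. Steps 1 and 2 are routine linear algebra.
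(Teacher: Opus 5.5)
Your proposal is correct and follows essentially the paper's route. You identify $\mathcal{T}$ through uniqueness of the degree-$(\alpha-1)$ interpolant under $\mathrm{rank}(K)=N$ (with $K$ the Gram matrix of scaled monomial features), followed by polarization. You then recover the components by Jennrich-type simultaneous diagonalization of two generic slices $\mathcal{T}(I,I,v,\ldots,v)$, where $\alpha\ge 3$ guarantees distinct eigenvalue ratios almost surely.

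Your last step is more careful than the paper's. Recovering $b_i=\mathcal{T}(x_i^*,\ldots,x_i^*)$ via the dual basis is valid for merely linearly independent $x_i$. The paper's formula $\mathcal{T}(\tilde x_k,\ldots,\tilde x_k)=b_{\pi(k)}$ is only correct when the $x_i$ are mutually orthogonal. Your explicit sign caveat $(x_i,b_i)\mapsto(-x_i,(-1)^\alpha b_i)$ also fills an omission in the paper's ``up to permutation'' conclusion.
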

\begin{proof}[Proof sketch]
    \emph{Step 1.}
Equation~\eqref{eq:f_point_eval_body_generic_v2} provides $m$ evaluations of the degree-$(\alpha-1)$ polynomial map $f$.
Under \eqref{eq:A1_rank_only_v2}, the induced interpolation linear system has a unique solution, which can be expressed
in kernel form using $K^\dagger$. This pins down $f$, and therefore
uniquely pins down $\mathcal{T}$.

\emph{Step 2.}
Once $f$ is identified, we can compute its Jacobian $Df(v)$. For $\alpha\ge 3$, $Df(v)$ corresponds to a weighted
second-moment operator of the form $$\sum_{i\in S} b_i(x_i^\top v)^{\alpha-2}x_i x_i^\top,$$
so choosing $v$ provides a data-dependent reweighting of the components. For almost every $v$, this operator is invertible
on the signal subspace, and thus induces a transform $Q(v)$ that maps the components to an orthonormal set.
Applying the same transform to $\mathcal{T}$ yields an orthogonally decomposable symmetric order-$\alpha$ tensor.
For almost every $v$, the resulting orthogonal decomposition is unique,
so mapping back through $Q(v)$ ensures that we recover $\{x_i\}_{i\in S}$.
\end{proof}

\subsection{Non-homogeneous polynomial activations}
In Section~\ref{sec:homogeneous}, we focused on the homogeneous activation $\sigma(t)=t^\alpha$, where the
implicit-bias limit admits a clean KKT characterization and leads to identifiability. We now explain why the
same conclusion extends to \emph{non-homogeneous} polynomial activations.
\paragraph{Homogenization.}
Consider a two-layer network
\[
\Phi_\sigma(\theta;x)=\sum_{j=1}^m a_j\,\sigma(w_j^\top x),\qquad \theta=(a,W),
\]
with a polynomial activation
\[
\sigma(t)=\sum_{k=0}^{\alpha} c_k t^k,\qquad c_\alpha\neq 0,\ \alpha\ge 3.
\]
Following \citet{cai2025implicit}, define the homogenization of the model by
\begin{equation}
\Phi_{\mathrm{H}}(\theta;x)\;:=\;\lim_{r\to\infty}\frac{\Phi_\sigma(r\theta;x)}{r^{M}},
\label{eq:homogenization-def}
\end{equation}
where $M$ is the smallest degree for which the limit exists and is nonzero.
For polynomial $\sigma$, $\Phi_{\mathrm{H}}$ is simply the model obtained by replacing $\sigma$ with its top-degree
homogeneous part $\sigma_{\mathrm{H}}(t):=c_\alpha t^\alpha$, namely
\begin{equation}
\Phi_{\mathrm{H}}(\theta;x)
\;=\;
\Phi_{\sigma_{\mathrm{H}}}(\theta;x)
\;=\;
c_\alpha\sum_{j=1}^m a_j\,(w_j^\top x)^\alpha .
\label{eq:homogenization-poly}
\end{equation}

\paragraph{Implicit bias and reconstruction.}
The main message of \citet{cai2025implicit} is that, under strong separability and their near-homogeneity
assumptions, gradient flow on $\Phi_\sigma$ drives $\|\theta_t\|\to\infty$ while the direction converges,
and the limiting direction satisfies the KKT conditions of a max-margin problem induced by the homogenized model
$\Phi_{\mathrm{H}}$ in \eqref{eq:homogenization-def}. Combining this with \eqref{eq:homogenization-poly} shows that,
for polynomial activations, the asymptotic KKT system is exactly the one corresponding to the homogeneous
$\alpha$-power network with activation $\sigma_{\mathrm{H}}(t)=c_\alpha t^\alpha$, which is the setting we discussed above.\looseness=-1

Similar to Section~\ref{sec:homogeneous}, only active constraints appear in the KKT stationarity equations. Let $\{\lambda_i^{\mathrm{H}}\}$
denote the KKT multipliers of the max-margin problem associated with $\Phi_{\mathrm{H}}$, and define the active set
\[
\mathcal{S}_{\mathrm{H}}:=\{\, i\in[n] : \lambda_i^{\mathrm{H}}>0 \,\}.
\]
Our reconstruction results therefore apply to the samples $\{x_i\}_{i\in\mathcal{S}_{\mathrm{H}}}$.
In particular, whenever the neuron-rank condition in Section~\ref{sec:homogeneous} holds, the
same tensor-based identifiability and reconstruction conclusions remain. See details in Appendix~\ref{app:check-nonhomog-cai}.

\section{The Sample Splitting Algorithm}
\label{sec:split}

In this section, we address the optimization issue of data reconstruction, where we introduce a splitting-based optimization method. The proposed algorithm is motivated by two fundamental challenges shared by existing reconstruction approaches: (i) the reconstruction objective is highly nonconvex and often contains large flat plateaus where standard gradient descent stagnates, and (ii) the number of training samples is unknown. Splitting creates new candidate samples by perturbing existing ones along directions of negative curvature. This allows us to escape plateaus, refine ambiguous candidates, and adaptively change the number of reconstructed samples.\looseness=-1

\subsection{Beyond KKT: a unified reconstruction objective}
\label{subsec:unified}
Although our identifiability analysis focuses on KKT-based reconstruction, the resulting optimization problem admits a more general interpretation. 
Several recent reconstruction methods, despite being derived under different theoretical settings, lead to objectives of a common form. In particular, we consider reconstruction objectives of the form
\begin{align}
    \label{formula: rec loss}
    L(\{x_i\}_{i=1}^k, \{\lambda_i\}_{i=1}^k) = \|\bs{\theta} - \sum_{i=1}^k\lambda_i f(\bs{\theta};x_i)\|^2
\end{align}
where $x_i\in\mathbb{R}^d$ denote reconstructed samples, $\lambda_i\in\mathbb{R}$ are their associated weights, and $f(\cdot)$ denotes a generic reconstruction map (with specific choices described below). We omit regularization terms and focus on the sample-dependent component, which is the primary source of nonconvexity.\looseness=-1

The formulation subsumes several representative reconstruction methods through different choices of $f(\cdot)$:
\begin{itemize}
    \item \textbf{KKT-based reconstruction}~\citep{haim2022}: 
    $f(\bs{\theta};x_i)=y_i\nabla_{\theta}\Phi(\bs{\theta},x_i)
    $;
    \item \textbf{Multiclass extension}~\citep{buzaglo2024}: 
    $f(\bs{\theta};x_i)=\nabla_{\theta}\big[\Phi_{y_i}(\bs{\theta},x_i)-\max_{j\neq y_i}\Phi_j(\bs{\theta},x_i)
    \big]
    $;
    \item \textbf{NTK-based attack}~\citep{loo2024}:
    $f(\bs{\theta};x_i)=\nabla_{\theta} \Phi_{\theta_0}(x_i)$.
\end{itemize}

\subsection{Algorithm}
\label{subsec:algorithm}
Motivated by the nonconvexity of the reconstruction objective, we propose a curvature-aware optimization method.
The key idea is to locally refine reconstructed samples by splitting them along directions of negative curvature, thereby escaping flat regions.

\textbf{Two-phase optimization.}
The proposed method alternates between two complementary phases:
\begin{itemize}
    \item \emph{Phase I (first-order descent):} jointly optimize $(x,\lambda)$ using gradient-based updates.
    \item \emph{Phase II (sample splitting):} detect directions of negative curvature and refine reconstructed samples by splitting.
\end{itemize}

\textbf{Sample splitting.}
To enable adaptive refinement, we allow each reconstructed sample $x_i$ to be replaced by $k_i$ off-springs
\begin{align*}
    \bs{x}_i := \{x_i^{[j]}\}_{j=1}^{k_i},
\
\bs{\lambda}_i := \{\lambda_i^{[j]}\}_{j=1}^{k_i},\quad
\text{s.t.}\;
\sum_{j=1}^{k_i}\lambda_i^{[j]} = \lambda_i,
\;
\lambda_i^{[j]}/\lambda_i > 0.
\end{align*}
The corresponding augmented objective becomes
\[
L(\{\bs{x}_i\}_{i=1}^k, \{\bs{\lambda}_i\}_{i=1}^k) = \|\bs{\theta} - \sum_{i=1}^k\sum_{j=1}^{k_i}\lambda_i^{[j]} f(\bs{\theta};x_i^{[j]})\|^2
\]
This construction preserves the objective value when all off-springs coincide with the original sample.

We characterize the local curvature relevant to splitting via the \emph{splitting matrix}
\begin{equation}
\label{eq:splitting-matrix}
S(x_i)=-2\lambda_i\sum_{p=1}^Pr_p\nabla_x^2 f_p(\bs{\theta};x_i),
\end{equation}
where $r_p=\bs{\theta}-\sum_{i=1}^k
\lambda_i f_p(\bs{\theta};x_i)$ and $f_p(\cdot)$ is the $p$-th element of vector-valued function $f(\cdot)$.
As shown in Appendix~\ref{app:curvature}, the minimum eigenvalue of $S(x_i)$ serves as a reliable proxy for directions of negative curvature of the full Hessian with respect to $x$.

If $\lambda_{\min}(S(x_i))$ is sufficiently negative, we split $x_i$ into two off-springs along the corresponding eigenvector:
\[
x_i^{\pm}=x_i\pm\eta v_{\min}(S(x_i)),
\quad
\lambda_i^{\pm}=\tfrac{1}{2}\lambda_i,
\]
where $\eta>0$ is a small step size. In practice, we impose an upper bound $\eta \le \eta_{\max}$ and select $\eta$ via a line search to ensure a sufficient decrease in the reconstruction objective.
A second-order Taylor expansion shows that this operation yields a strict decrease in the objective whenever $\lambda_{\min}(S(x_i))<0$, and the optimality of this splitting strategy is also proven (Appendix~\ref{app:split}). The stopping criteria for sample splitting step is set as $\lambda_{\min}(S(x_i))>-\epsilon_H,\forall i$ for some stopping threshold $\epsilon_H>0$.\looseness=-1

The full procedure is summarized in Algorithm~\ref{alg:splitting}.

\begin{algorithm}[tb]
  \caption{Sample Splitting Algorithm}
  \label{alg:splitting}
  \begin{algorithmic}[1]
    \State \textbf{Input:} Initial samples $\{x_i\}_{i=1}^k$, weights $\{\lambda_i\}_{i=1}^k$,
    splitting threshold $\lambda_\ast < 0$, maximum splitting step size $\eta_{\max} > 0$.
    \State \textbf{Output:} Reconstructed samples $\{x_i\}$ and weights $\{\lambda_i\}$.

    \Repeat
        \State \textbf{Phase I: First-order descent.}
        \State Update $(x,\lambda)$ using gradient-based optimization until $\|\nabla_x L\| \le \epsilon$.

        \State \textbf{Phase II: Sample Splitting.}
        \For{$i = 1$ \textbf{to} $k$}
            \State Compute splitting matrix $S(x_i)$. Compute $\lambda_{\min}(S(x_i))$.
        \EndFor

        \If{$\min_i \lambda_{\min}(S(x_i)) < \lambda_\ast$}
            \State Select $i$ such that $\lambda_{\min}(S(x_i)) < \lambda_\ast$. Let $v_{\min}(S(x_i))$ be the corresponding eigenvector. \State Choose $\eta \le \eta_{\max}$ by line search.
            \State \textbf{Sample splitting:}
            \State $x_i^{+} \gets x_i + \eta\, v_{\min}(S(x_i)),\quad x_i^{-} \gets x_i - \eta\, v_{\min}(S(x_i)),\quad \lambda_i^{+} \gets \frac{1}{2}\lambda_i,\quad
                   \lambda_i^{-} \gets \frac{1}{2}\lambda_i$
        \EndIf
    \Until{a stopping criterion is reached}
  \end{algorithmic}
\end{algorithm}

\subsection{Convergence Analysis}
\label{subsec:convergence}

We analyze the convergence of the proposed sample splitting algorithm.
Our goal is to show that the algorithm converges to an approximate second-order stationary point with respect to the reconstructed samples $x := (x_1^\top,\dots,x_k^\top)^\top \in \mathbb{R}^{kd},$ while treating the coefficients $\lambda = (\lambda_1,\dots,\lambda_k)^\top$ as auxiliary variables.

\textbf{Stationarity with respect to $x$.}
Although optimization is performed jointly over $(x,\lambda)$, the coefficients $\lambda$ admit a closed-form least-squares minimizer given $x$.
Since the reconstructed samples $x$ are the ultimate target, we focus on stationarity with respect to $x$.

\begin{assumption}
\label{ass:phi}
Recall the reconstruction objective (\ref{formula: rec loss}). We assume that $L(x,\lambda)$ is three times  differentiable in $x$ and satisfies $\|\nabla_{x}^2L(x,\lambda)\|\le l$, $\|\nabla_{x}^3L(x,\lambda)\|\le \rho$ for all $x\in\mathbb{R}^{kd},\lambda\in \mathbb{R}^k$.
\end{assumption}

\begin{definition}[Approximate second-order stationarity]
\label{def:sosp}
Under Assumption~\ref{ass:phi}, given $\epsilon>0$, a point $(x,\lambda)$ is an $\epsilon$-second-order stationary point with respect to $x$ if
\[
\|\nabla_x L(x,\lambda)\| \le \epsilon,
\qquad
\lambda_{\min}(\nabla_x^2 L(x,\lambda)) \ge -\sqrt{\rho\epsilon}.
\]
This definition follows the classical notion of approximate second-order stationarity for Hessian-Lipschitz nonconvex objectives\citep{nesterov2006cubic}.
\end{definition}

\textbf{Surrogate curvature via splitting matrices.}
Direct evaluation of the full Hessian $\nabla_x^2 L$ is computationally expensive. Instead, the algorithm relies on per-sample splitting matrices $S(x_i)$ which capture local curvature directions relevant to splitting. As shown in Appendix~\ref{app:curvature}, the minimum eigenvalue of the full Hessian satisfies
\begin{align}
    \lambda_{\min}(\nabla_x^2 L(x,\lambda))
\;\ge\;
\min_{i=1,\dots,k} \lambda_{\min}(S(x_i)).
\label{formula: hessian}
\end{align}
We now state the main convergence result of the proposed algorithm.

\begin{theorem}
\label{thm:global}
Under Assumption~\ref{ass:phi}, suppose initial reconstruction loss is $L_0$. Let $\eta_g \le 1/l$ be the gradient descent step size, $\eta \le \frac{3}{2}\sqrt{\epsilon/\rho}$ the splitting step size,
and $\epsilon_H = \sqrt{\rho \epsilon}$ be the stopping threshold. Then the proposed algorithm will reach an $\epsilon$-second-order stationary point in at most
    \[
    O(\frac{L_0}{\sqrt{\rho}\eta^2}\epsilon^{-1/2})
    \]
splitting steps and $O(\epsilon^{-2})$ total iterations.
\end{theorem}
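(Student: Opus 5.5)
The argument is a potential-function (sufficient-decrease) analysis with the reconstruction loss $L$ as the potential. $L\ge 0$ always, and neither phase ever increases it, so the total decrease over the whole run is at most $L_0$. It therefore suffices to show two things: each splitting step removes a fixed amount of order $\rho^{-1/2}\epsilon^{3/2}$, expressed in terms of $\eta$, and each gradient iteration removes order $\eta_g\epsilon^2$ while the gradient is large. We treat $\lambda$ as fixed within a splitting step. Any $\lambda$-update in Phase I is a descent step, or the closed-form least-squares minimizer, so it can only lower $L$ further and does not disturb the accounting.

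\emph{Phase I.} Under Assumption~\ref{ass:phi}, $\nabla_x L$ is $l$-Lipschitz. With $\eta_g\le 1/l$, the descent lemma gives
\[
L(x-\eta_g\nabla_x L)\le L(x)-\tfrac{\eta_g}{2}\|\nabla_x L\|^2 .
\]
As long as $\|\nabla_x L\|>\epsilon$, each iteration decreases $L$ by at least $\eta_g\epsilon^2/2$. Summing over all Phase I iterations across every outer loop, their number is at most $2L_0/(\eta_g\epsilon^2)=O(\epsilon^{-2})$.

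\emph{Phase II.} Fix a sample $x_i$ with $\lambda_{\min}(S(x_i))<-\epsilon_H=-\sqrt{\rho\epsilon}$ and unit eigenvector $v$. Write the split objective as $g(\eta)=L(\ldots,x_i+\eta v,x_i-\eta v,\ldots)$ with weights $\lambda_i/2$ each. Because the perturbation is antisymmetric, the first-order terms cancel: $\tfrac{\lambda_i}{2}(\nabla_x f\,v-\nabla_x f\,v)=0$, so $g'(0)=0$. The second-order term is $\tfrac{\eta^2}{2}v^\top S(x_i)v$. To see this, expand $\sum_j\lambda_i^{[j]}f(x_i^{[j]})=\lambda_i f(x_i)+\tfrac{\lambda_i\eta^2}{2}\nabla_x^2 f[v,v]+O(\eta^3)$ and substitute into $\|r\|^2$. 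The cross term gives exactly $-\lambda_i\eta^2\sum_p r_p v^\top\nabla^2 f_p v=\tfrac{\eta^2}{2}v^\top S v$. The squared term is $O(\eta^4)$ and goes into the remainder; this is the computation referred to in Appendix~\ref{app:split}. The third-order remainder is controlled by $\rho$ via Assumption~\ref{ass:phi}, giving
\[
L_{\text{new}}\le L-\tfrac{\eta^2}{2}\sqrt{\rho\epsilon}+\tfrac{\rho}{6}\eta^3 .
\]
For $\eta\le\tfrac32\sqrt{\epsilon/\rho}$ we have $\tfrac{\rho}{6}\eta^3\le\tfrac14\eta^2\sqrt{\rho\epsilon}$, so the decrease is at least $\tfrac{\eta^2}{4}\sqrt{\rho\epsilon}$. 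The line search can only improve on this. Hence the number of splitting steps is at most $4L_0/(\eta^2\sqrt{\rho\epsilon})=O\!\left(\frac{L_0}{\sqrt\rho\,\eta^2}\epsilon^{-1/2}\right)$. With $\eta$ of order $\sqrt{\epsilon/\rho}$ this is $O(\epsilon^{-3/2})$, which is dominated by $O(\epsilon^{-2})$, giving the total iteration bound.

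\emph{Termination.} The algorithm stops only after Phase I has returned a point with $\|\nabla_xL\|\le\epsilon$ and every $\lambda_{\min}(S(x_i))\ge-\sqrt{\rho\epsilon}$. By \eqref{formula: hessian}, this gives $\lambda_{\min}(\nabla_x^2L)\ge-\sqrt{\rho\epsilon}$, which is Definition~\ref{def:sosp}.

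\emph{Main obstacle.} The delicate step is the Phase II Taylor bound. The split changes the dimension of $x$, so Assumption~\ref{ass:phi} has to be read as applying to the augmented objective with uniform constants $l,\rho$ after splitting, and the third-derivative bound must control the remainder along the direction $(v,-v)$ with the $\lambda_i/2$ weights. I would handle this by viewing $g(\eta)$ as the restriction of the augmented $L$ to the line through $(x_i,x_i)$ in direction $(v,-v)/\sqrt2$ and applying the $\rho$-bound there. The factor $\sqrt2$ and the $\lambda_i/2$ weights are then absorbed into the constants. The stated range $\eta\le\tfrac32\sqrt{\epsilon/\rho}$ already leaves room for this.
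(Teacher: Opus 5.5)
Your proposal follows the paper's proof essentially step for step: the descent-lemma count for Phase~I, then the second-order splitting expansion (first-order terms cancelling, curvature term $\frac{\eta^2}{2}v^\top S(x_i)v$, remainder $\frac{\rho}{6}\eta^3$) giving a decrease of $\frac{\epsilon_H\eta^2}{4}$ per split when $\eta\le\frac32\sqrt{\epsilon/\rho}$, and finally termination via $\lambda_{\min}(\nabla_x^2L)\ge\min_i\lambda_{\min}(S(x_i))$. One correction to your ``main obstacle'' remark, on a point the paper's proof passes over entirely: measured honestly, the displacement $(\eta v,-\eta v)$ gives a remainder of $\frac{\rho}{6}(\sqrt2\eta)^3=\frac{\sqrt2\rho}{3}\eta^3$, which at $\eta=\frac32\sqrt{\epsilon/\rho}$ exceeds the guaranteed gain $\frac{\eta^2}{2}\sqrt{\rho\epsilon}$, so the stated range does not leave room---you need $\eta\le\frac{3}{4\sqrt2}\sqrt{\epsilon/\rho}$ (or an appeal to the line search), which changes only the constants in the bounds.
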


\emph{Proof sketch.}
The algorithm alternates between two phases.

\emph{Phase I (first-order descent).}
While $\|\nabla_x L(x,\lambda)\|>\epsilon$, standard gradient descent guarantees a sufficient decrease
\[
L(x^{t+1},\lambda^{t+1})
\le
L(x^t,\lambda^t)
-\frac{\eta_g}{2} \|\nabla_x L(x^t,\lambda^t)\|^2,
\]
Since $L$ is bounded below, Phase~I can only execute a finite number of iterations before the gradient norm becomes small.

\emph{Phase II (sample splitting).}
When $\|\nabla_x L(x,\lambda)\|\le \epsilon$, the algorithm evaluates the splitting matrices. If $\lambda_{\min}(S(x_i))\ge -\epsilon_H$ for all $i$, then by the surrogate-to-Hessian comparison (\ref{formula: hessian}),
the point already satisfies approximate second-order stationarity.\looseness=-1

Otherwise, splitting a sample with $\lambda_{\min}(S(x_i))<-\epsilon_H$ yields a strict decrease
\[
L_{\text{after split}}
\le
L_{\text{before split}}
- \frac{\eta^2}{2}\lambda_{\min}(S(x_i)),
\]
where $\eta$ is the splitting step size.
Since each split produces a quantifiable decrease and $L$ is bounded below, only finitely many splits can occur.

Combining the two phases, the algorithm cannot cycle indefinitely and must terminate at an approximate second-order stationary point.
The detailed derivation is deferred to Appendix~\ref{app:convergence}.

\textbf{Remark.}
Our convergence analysis can be related to existing results on non-convex optimization. It is well known that gradient descent guarantees $\|\nabla_x L\|\le \epsilon$ within $O(\epsilon^{-2})$ iterations under standard smoothness assumptions. The proposed sample splitting strategy achieves an additional second-order-type guarantee, without requiring explicit computation of the full Hessian matrix, and within the same order of iterations. This is conceptually related to the results of \citet{Carmon2016GradientDF}, where approximate second-order stationarity is obtained via cubic regularization using Hessian-vector products. Moreover, our splitting criterion can detect directions of improvement even when the Hessian is positive semidefinite, highlighting that splitting stability captures complementary geometric information beyond standard Hessian-based analysis.

\section{Experiments}

In this section, we empirically evaluate the proposed sample splitting algorithm on image reconstruction tasks. Since splitting is a generic optimization technique, we demonstrate its effect using three representative reconstruction methods under different settings. Each subsection focuses on a specific aspect of the behavior of splitting, while additional results are deferred to the appendix.

\subsection{Experiment setup}
\textbf{Datasets.} We evaluate our methods on two standard image classification datasets: MNIST and CIFAR-10. For each dataset, we randomly sample a small subset of training images, which are treated as the unknown training set to be reconstructed. All images are normalized by subtracting the dataset mean.

\textbf{Models.} Following \citet{haim2022}, we consider a fully connected network with architecture
$d$-1000-1000-1, where $d$ denotes the input dimension. Models are trained to convergence under either binary or multiclass classification settings with balanced labels, and the final parameters are used for reconstruction.

\textbf{Reconstruction Methods.} We consider three reconstruction methods: the KKT-based approaches of \citet{haim2022} and \citet{buzaglo2024}, and the NTK-based method of \citet{loo2024}. Each method is evaluated with and without sample splitting under identical initialization and optimization settings. For simplicity, sample splitting is applied periodically after a fixed number of gradient descent iterations. Reconstruction quality is measured by SSIM on CIFAR-10 and L2 distance on MNIST.

\subsection{Reconstruction performance}
\begin{figure*}[t]
    \centering
    \begin{subfigure}{0.99\textwidth}
        \centering
        \includegraphics[width=\linewidth]{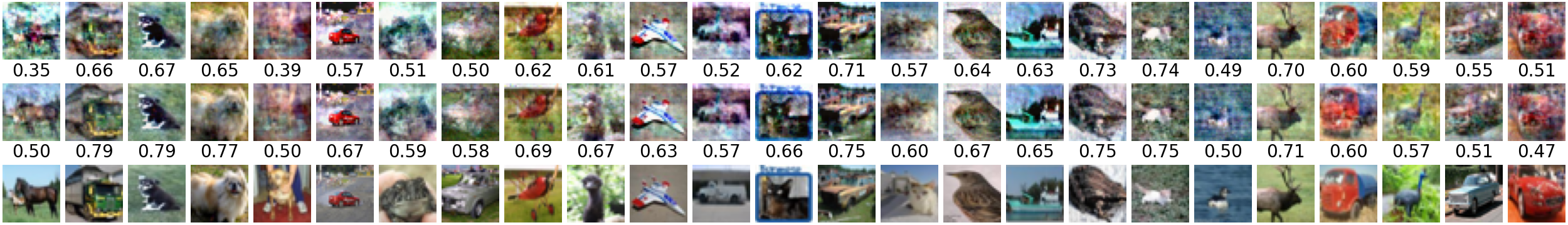}
        \caption{Reconstructions (SSIM) from MLP trained on 100 images from CIFAR-10 using \citet{loo2024}'s}
    \end{subfigure}
    \hfill
    \begin{subfigure}{0.99\textwidth}
        \centering
        \includegraphics[width=\linewidth]{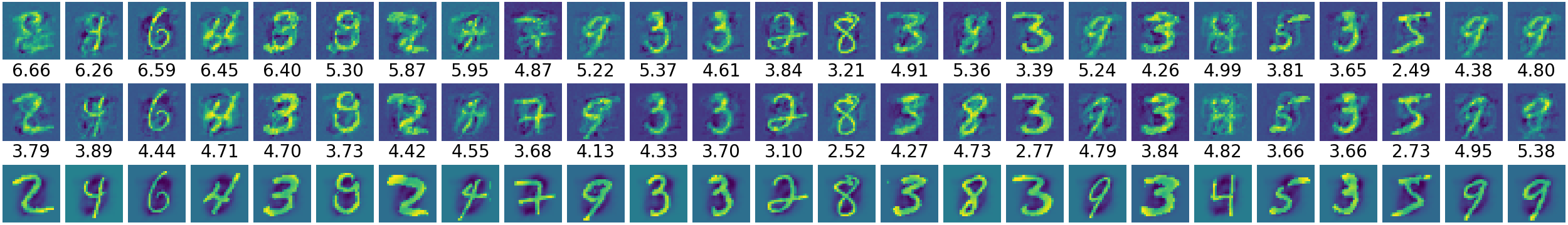}
        \caption{Reconstructions (L2 distance) from MLP trained on 100 images from MNIST using \citet{loo2024}'s}
    \end{subfigure}
    \caption{Top 25 images reconstructed from MLP trained on 100 images using \citet{loo2024}'s (row 1), \citet{loo2024}'s with sample splitting (row 2) , and corresponding nearest neighbors from the dataset (row 3).}
    \label{pic:performance}
\end{figure*}
We first evaluate reconstruction quality using \citet{loo2024}'s method on a binary classification task. Since this approach operates in the NTK regime and is most stable at relatively small scales, we reconstruct training sets of size 100 for both MNIST and CIFAR-10.

Figure~\ref{pic:performance} compares reconstructions with and without sample splitting. 
To focus on meaningful reconstructions, we select the top 25 training samples whose metric exceeds a fixed threshold either before or after splitting, and sort them by metric improvement. On CIFAR-10, 21 out of 25 samples achieve higher SSIM after splitting; on MNIST, 21 out of 25 samples achieve lower L2 distance.
To assess the effect over the full training set, Figure~\ref{pic: metric comparison} reports a per-training sample comparison. On CIFAR-10, improvements are concentrated among samples with higher baseline SSIM, while poorly reconstructed samples change little. On MNIST, the majority of samples show improvement.

\begin{figure}[htbp] 
    \centering
    \begin{subfigure}{0.3\textwidth}
        \centering
        \includegraphics[width=\linewidth]{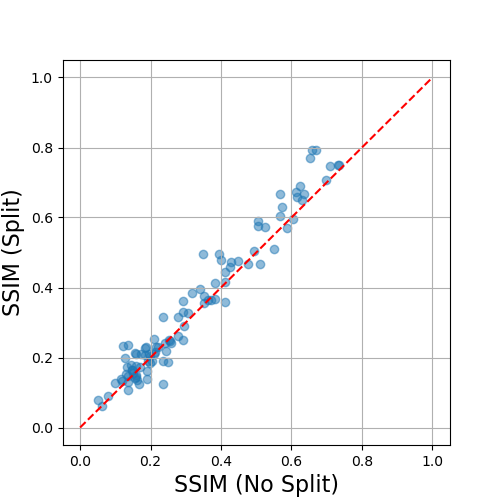}
        \caption{CIFAR-10}
    \end{subfigure}
    \begin{subfigure}{0.3\textwidth}
        \centering
        \includegraphics[width=\linewidth]{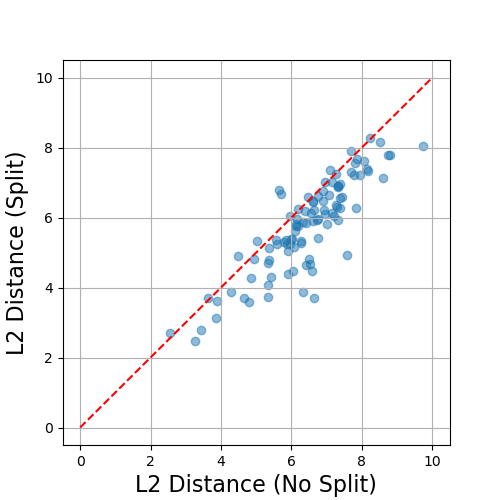}
        \caption{MNIST}
    \end{subfigure}
    \caption{Per-sample comparison of reconstruction metrics before and after splitting using \citet{loo2024}'s method. Each point corresponds to a training sample, with the horizontal axis denoting the metric value without splitting and the vertical axis denoting the metric value with splitting.}
    \label{pic: metric comparison}
\end{figure}

\subsection{Effect of splitting}

\paragraph{Optimization Dynamics.}
\begin{figure}[htbp]
    \centering
    \begin{subfigure}{0.49\textwidth}
        \centering
        \includegraphics[width=\linewidth]{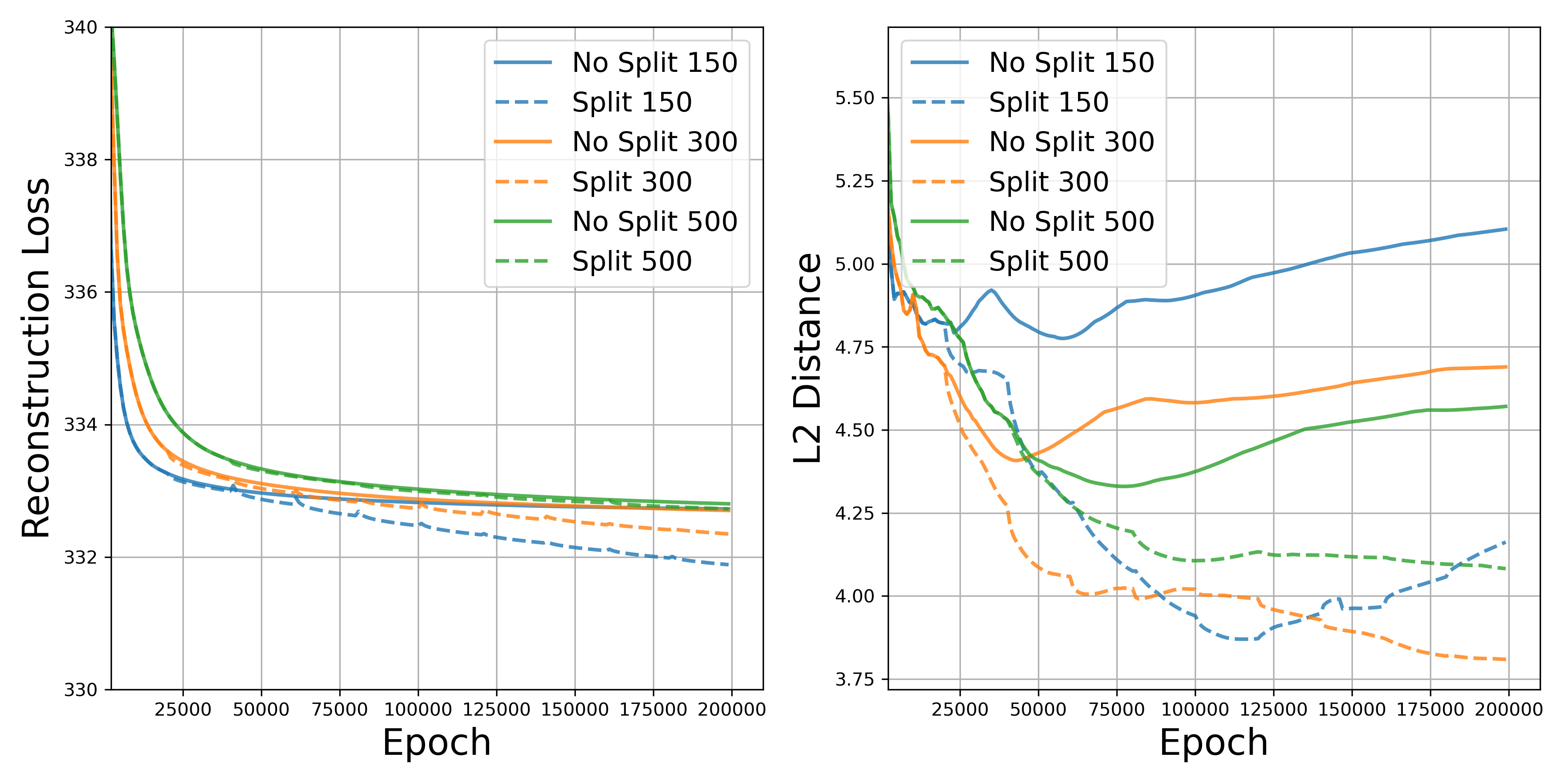}
        \caption{MNIST}
    \end{subfigure}
    \begin{subfigure}{0.49\textwidth}
        \centering
        \includegraphics[width=\linewidth]{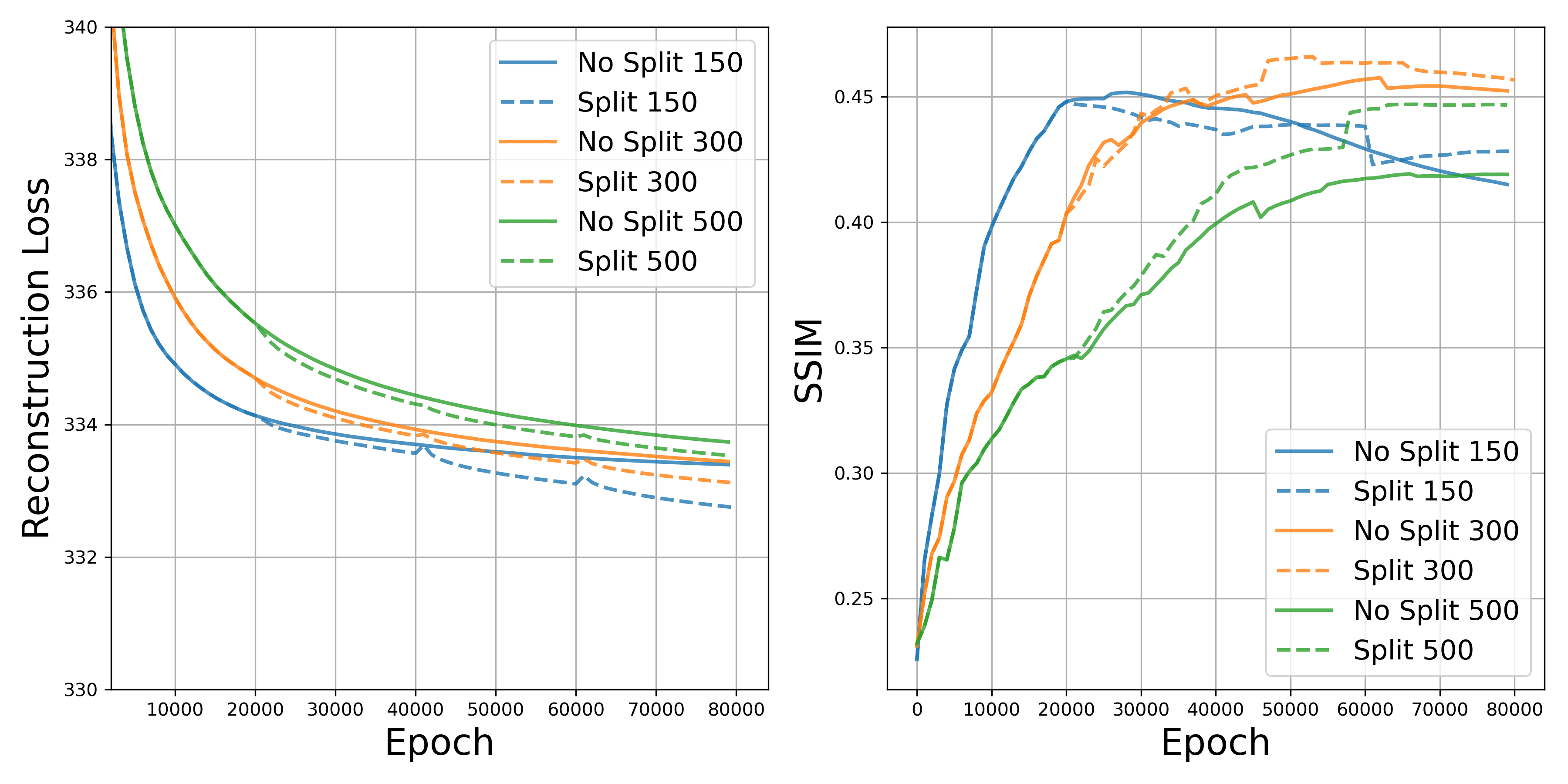}
        \caption{CIFAR-10}
    \end{subfigure}
    \caption{Reconstruction loss and metrics for \citet{haim2022}'s method with and without sample splitting using 500 training samples, with different initial reconstruction sizes per class.}
    \label{pic:opt_dynamics}
\end{figure}

We next study how splitting affects optimization dynamics using the method in \citet{haim2022} on a binary classification task with 500 training samples. Figure~\ref{pic:opt_dynamics} shows the evolution of reconstruction loss and evaluation metrics. Without splitting, the loss decreases steadily but slowly, while the metric improves early and then degrades, indicating drift away from the target data distribution. After splitting is applied, the metric decreases again near splitting events, improving reconstruction quality. This behavior is observed across different initial reconstruction sizes, suggesting that the effect of splitting is relatively robust.

\paragraph{Per-Sample Trajectory Analysis}
\begin{figure}[hbtp]
    \centering
    \begin{subfigure}{0.49\textwidth}
        \centering
        \includegraphics[width=\linewidth]{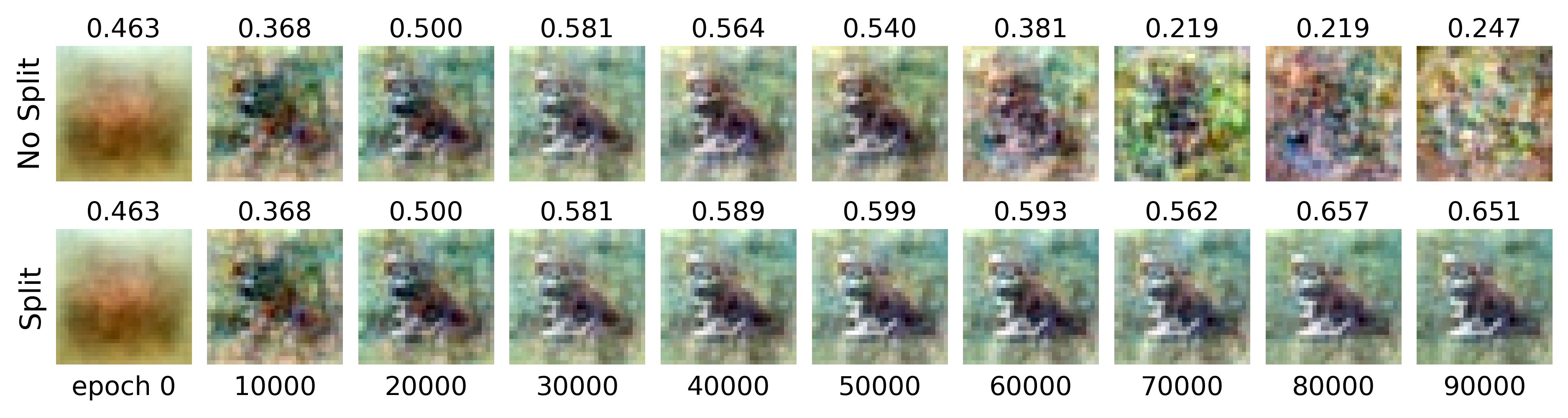}
        \caption{CIFAR-10 (measured by SSIM; splitting checked every 30000 epochs)}
    \end{subfigure}
    \begin{subfigure}{0.49\textwidth}
        \centering
        \includegraphics[width=\linewidth]{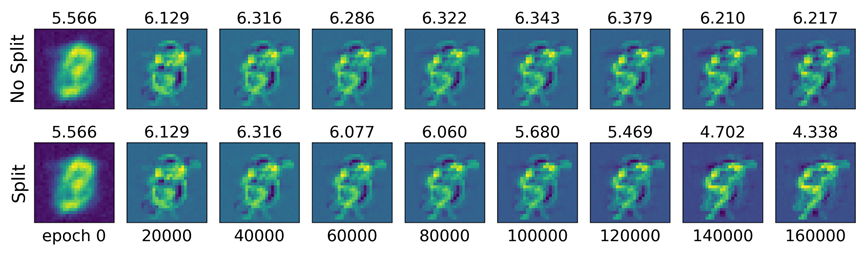}
        \caption{MNIST (measured by L2 distance; splitting checked every 40000 epochs)}
    \end{subfigure}
    \caption{Optimization trajectories of a representative reconstructed sample using \citet{buzaglo2024}'s method, with and without sample splitting.}
    \label{pic:trajectory}
\end{figure}
Finally, we visualize the effect of splitting at the level of individual samples using the multiclass reconstruction method of \citet{buzaglo2024}, with 500 training samples.
Figure~\ref{pic:trajectory} tracks the optimization trajectory of a representative reconstructed image. In the CIFAR-10 example, splitting is triggered at epoch 30,000, after which the SSIM exhibits a consistent increase. In contrast, without splitting, the SSIM continues to decrease. This example suggests that splitting can refine ambiguous reconstructions by generating nearby candidates, allowing further progress in cases where gradient-based updates alone appear to stagnate.

\section{Discussion}
This work studies data reconstruction from two complementary lens: identifiability and optimization. 
On the theoretical side, we show that for two-layer networks with polynomial activations of degree at least three, the KKT system uniquely determines the training samples under mild conditions, offering a principled explanation for the empirical success of prior reconstruction methods. On the algorithmic side, we introduce sample splitting as a lightweight refinement strategy that leverages second-order information to improve optimization in nonconvex reconstruction landscapes. Empirically, we demonstrate that sample splitting can enhance reconstruction quality across several existing methods.


As future work, we would like to extend the identifiability results to a robust analysis (approximate reconstruction) and partial reconstruction. We conjecture that when the interpolation condition is violated or even in the under-determined regime (network width $m$ smaller than the number of total unknowns), samples associated with spectrally isolated eigenmodes of the induced tensor can be approximately reconstructed even when full recovery is impossible.

\newpage 

\bibliography{reference}

\newpage
\appendix

\tableofcontents

\section{Proofs for Identifiability}
\label{app:identifiability}

\subsection{Preliminaries and notation}
Throughout this section, we fix $\alpha\ge 3$ and an active set
\(
S:=\{i\in[n]:\lambda_i>0\}.
\)
Recall the signed weights $b_i:=\lambda_i y_i$ and the symmetric order-$\alpha$ tensor
\begin{equation}
\mathcal{T}
\;:=\;
\sum_{i\in S} b_i\, x_i^{\otimes \alpha}.
\label{eq:app_def_T}
\end{equation}
Define the degree-$(\alpha-1)$ vector-valued homogeneous polynomial map (a tensor contraction)
\begin{equation}
f(w)
\;:=\;
\mathcal{T}(\,\cdot\,,w,\ldots,w)
\;=\;
\sum_{i\in S} b_i (x_i^\top w)^{\alpha-1} x_i
\qquad (w\in\mathbb{R}^d).
\label{eq:app_def_f}
\end{equation}
Let $N:=\binom{d+\alpha-2}{\alpha-1}$ be the dimension of the space of homogeneous polynomials of degree $\alpha-1$ in $d$ variables.

We also use the homogeneous polynomial kernel of degree $\alpha-1$:
\begin{equation}
\kappa(u,v):=(u^\top v)^{\alpha-1}.
\label{eq:app_kernel}
\end{equation}
Given $W_1,\ldots,W_m$, define the Gram matrix $K\in\mathbb{R}^{m\times m}$ by
\begin{equation}
K_{pq}:=\kappa(W_p,W_q)=(W_p^\top W_q)^{\alpha-1}.
\label{eq:app_K_def}
\end{equation}
For any query $w\in\mathbb{R}^d$, define the kernel vector $k(w)\in\mathbb{R}^m$ by
\begin{equation}
k(w):=\big(\kappa(W_1,w),\ldots,\kappa(W_m,w)\big)^\top.
\label{eq:app_k_def}
\end{equation}

\subsection{Proof of Theorem~\ref{thm:tensor}}
With the notations introduced above, we give a detailed proof of Theorem~\ref{thm:tensor}
\begin{proof}[Proof of Theorem~\ref{thm:tensor}]
We prove the two claims in two steps.

\paragraph{Step 1: $(a,W)$ uniquely determines $f$ and $\mathcal{T}$.}
By the KKT stationarity equations eq.~\eqref{eq:f_point_eval_body_generic_v2},
for every neuron $j\in[m]$ we have the point-evaluation identity
\begin{equation}
W_j \;=\; \alpha\, a_j\, f(W_j).
\label{eq:app_eval_f}
\end{equation}
Hence, whenever $a_j\neq 0$, the value $f(W_j)$ is directly known from $(a,W)$:
\begin{equation}
f(W_j)\;=\;\frac{1}{\alpha a_j}W_j.
\label{eq:app_eval_f_explicit}
\end{equation}
(If $a_j=0$, then \eqref{eq:app_eval_f} forces $W_j=0$ and thus $f(W_j)=f(0)=0$ is also known.)

\emph{Uniqueness of interpolation.}
Let $\phi:\mathbb{R}^d\to\mathbb{R}^N$ be the standard monomial feature map for homogeneous polynomials
of degree $\alpha-1$.
Since each coordinate of $f$ is such a polynomial, there exists a coefficient matrix
$A\in\mathbb{R}^{d\times N}$ such that
\begin{equation}
f(w)=A\,\phi(w).
\label{eq:app_f_feature}
\end{equation}

Define the feature matrix $V\in\mathbb{R}^{m\times N}$ by $V_{j,:}^\top=\phi(W_j)$ and the value matrix
$F\in\mathbb{R}^{m\times d}$ by $F_{j,:}^\top=f(W_j)$. Then \eqref{eq:app_f_feature} yields
\begin{equation}
F = V A^\top.
\label{eq:app_linear_system}
\end{equation}
Moreover, by construction of the polynomial kernel, we have
\begin{equation}
K = V V^\top.
\label{eq:app_K_VVT}
\end{equation}

Assume the interpolation condition \eqref{eq:A1_rank_only_v2}, i.e.\ $\mathrm{rank}(K)=N$.
Since $K=VV^\top$ and $\mathrm{rank}(K)=\mathrm{rank}(V)$, we have $\mathrm{rank}(V)=N$. In particular,
$V$ has full column rank and \eqref{eq:app_linear_system} has a unique solution $A$.

Equivalently, one may express the unique interpolant in kernel form.
Since $V$ has full column rank, its Moore--Penrose pseudoinverse is
\begin{equation}
V^\dagger = V^\top (VV^\top)^\dagger = V^\top K^\dagger.
\label{eq:app_V_pinv}
\end{equation}
Thus from \eqref{eq:app_linear_system} we get
\begin{equation}
A^\top = V^\dagger F = V^\top K^\dagger F,
\qquad\text{so}\qquad
f(w)=A\phi(w)=F^\top K^\dagger k(w).
\label{eq:app_kernel_interpolant}
\end{equation}
This shows $f$ is uniquely determined by the evaluations \eqref{eq:app_eval_f_explicit}, hence by $(a,W)$.

\emph{$f$ uniquely determines $\mathcal{T}$.}
Define the scalar homogeneous polynomial of degree $\alpha$:
\begin{equation}
p(w):=\langle w,f(w)\rangle.
\label{eq:app_p_def}
\end{equation}
Using \eqref{eq:app_def_f}, we have
\begin{equation}
p(w)
=
\sum_{i\in S} b_i (x_i^\top w)^{\alpha-1} \langle w,x_i\rangle
=
\sum_{i\in S} b_i (x_i^\top w)^{\alpha}
=
\mathcal{T}(w,\ldots,w).
\label{eq:app_p_equals_T}
\end{equation}
Hence knowledge of $f$ implies knowledge of $p(w)=\mathcal{T}(w,\ldots,w)$ for all $w$.

Finally, the symmetric multilinear form $\mathcal{T}$ is uniquely determined by its diagonal polynomial
$p(w)=\mathcal{T}(w,\ldots,w)$. By calculation, for any $u_1,\ldots,u_\alpha\in\mathbb{R}^d$,
\begin{equation}
\mathcal{T}(u_1,\ldots,u_\alpha)
=
\frac{1}{\alpha!\,2^\alpha}
\sum_{\varepsilon\in\{\pm 1\}^\alpha}
\Big(\prod_{t=1}^{\alpha}\varepsilon_t\Big)\;
p\Big(\sum_{t=1}^{\alpha}\varepsilon_t u_t\Big).
\label{eq:app_polarization}
\end{equation}
Therefore, $\mathcal{T}$ is uniquely determined by $f$, and thus uniquely determined by $(a,W)$.

\paragraph{Step 2: recovering $\{(x_i,b_i)\}_{i\in S}$ from $\mathcal{T}$.}
Let $r:=|S|$ and recall
\begin{equation}
\mathcal{T}=\sum_{i\in S} b_i x_i^{\otimes \alpha},
\qquad
f(w)=\sum_{i\in S} b_i (x_i^\top w)^{\alpha-1} x_i.
\label{eq:app_step2_recall}
\end{equation}
Define the symmetric matrix slice
\begin{equation}
M(v)
\;:=\;
\mathcal{T}(\,\cdot\,,\,\cdot\,,v,\ldots,v)
\;=\;
\sum_{i\in S} \gamma_i(v)\, x_i x_i^\top,
\qquad
\gamma_i(v):=b_i (x_i^\top v)^{\alpha-2}.
\label{eq:app_Mv_def}
\end{equation}
Note $M(v)$ can be indefinite when the $\gamma_i(v)$ have mixed signs. The argument here does not require
$M(v)\succeq 0$.

Assume the active directions $\{x_i\}_{i\in S}$ are linearly independent, so $r\le d$ and
$X:=[x_i]_{i\in S}\in\mathbb{R}^{d\times r}$ has full column rank.
Draw $v_1,v_2\in\mathbb{R}^d$ randomly from any absolutely continuous distribution (e.g. uniform distribution), Then
\begin{equation}
x_i^\top v_1\neq 0,\quad x_i^\top v_2\neq 0\quad(\forall i\in S),
\qquad
\frac{(x_i^\top v_2)^{\alpha-2}}{(x_i^\top v_1)^{\alpha-2}}
\neq
\frac{(x_{i'}^\top v_2)^{\alpha-2}}{(x_{i'}^\top v_1)^{\alpha-2}}
\quad(\forall i\neq i').
\label{eq:app_generic_v1v2}
\end{equation}
holds almost surely.

Compute
\begin{equation}
A:=M(v_1),\qquad B:=M(v_2),
\label{eq:app_AB_def}
\end{equation}
which are computable from $\mathcal{T}$ via \eqref{eq:app_Mv_def}.
Let $U\in\mathbb{R}^{d\times r}$ be any orthonormal basis of $\mathrm{range}(A)=\mathrm{span}\{x_i\}_{i\in S}$
(e.g.\ from an SVD of $A$). Define
\begin{equation}
\bar A:=U^\top A U,\qquad \bar B:=U^\top B U \in\mathbb{R}^{r\times r}.
\label{eq:app_reduced}
\end{equation}
Since $X$ has full column rank and $\gamma_i(v_1)\neq 0$, we have $\mathrm{rank}(A)=r$, hence $\bar A$ is invertible.

Writing $D_\ell:=\mathrm{diag}(\gamma_i(v_\ell))_{i\in S}$ for $\ell\in\{1,2\}$, we have
\begin{equation}
A = X D_1 X^\top,\qquad B=X D_2 X^\top.
\label{eq:app_XD}
\end{equation}
Let $G:=U^\top X\in\mathbb{R}^{r\times r}$; since $U$ spans $\mathrm{range}(X)$, $G$ is invertible and
\begin{equation}
\bar A = G D_1 G^\top,\qquad \bar B = G D_2 G^\top.
\label{eq:app_GD}
\end{equation}

Consider
\begin{equation}
C:=\bar A^{-1}\bar B.
\label{eq:app_C_def}
\end{equation}
Using \eqref{eq:app_GD},
\begin{equation}
C
=
(G^\top)^{-1}\, D_1^{-1}D_2\, G^\top,
\label{eq:app_C_similar}
\end{equation}
so $C$ is similar to the diagonal matrix $D:=D_1^{-1}D_2$ with entries
\begin{equation}
D_{ii}=\frac{\gamma_i(v_2)}{\gamma_i(v_1)}
=
\frac{(x_i^\top v_2)^{\alpha-2}}{(x_i^\top v_1)^{\alpha-2}}.
\label{eq:app_ratio}
\end{equation}
By \eqref{eq:app_generic_v1v2}, these are pairwise distinct, hence $C$ has $r$ distinct real eigenvalues and is diagonalizable.
Let $C = V \Lambda V^{-1}$ with $V$ invertible. Comparing with \eqref{eq:app_C_similar}, we get that
$V^{-\top}$ equals $G$ up to
permutation and scaling. Therefore, defining
\begin{equation}
\widehat X
\;:=\;
U\, V^{-\top}\in\mathbb{R}^{d\times r},
\label{eq:app_Xhat}
\end{equation}
we obtain
\begin{equation}
\widehat X = X \Pi \Delta
\label{eq:app_Xhat_relation}
\end{equation}
for some permutation matrix $\Pi$ and some invertible diagonal matrix $\Delta$.
Thus the columns of $\widehat X$ recover the active directions $\{x_i\}_{i\in S}$ up to permutation and scaling.

Since the model fixes a scale for $x_i$, normalize each recovered column:
\begin{equation}
\tilde x_k := \hat x_k/\|\hat x_k\|_2.
\label{eq:app_normalize}
\end{equation}
Then contract $\mathcal{T}$ to recover the coefficient:
\begin{equation}
\tilde b_k
\;:=\;
\mathcal{T}(\tilde x_k,\ldots,\tilde x_k)
=
\sum_{i\in S} b_i (x_i^\top \tilde x_k)^\alpha
=
b_{\pi(k)}.
\label{eq:app_b_recover}
\end{equation}

Combining Step~1 and Step~2 completes the proof: $(a,W)$ uniquely determines $\mathcal{T}$, and from $\mathcal{T}$ we recover
$\{(x_i,b_i)\}_{i\in S}$ up to permutation almost surely.
\end{proof}

\subsection{Checking the assumptions for non-homogeneous activations}
\label{app:check-nonhomog-cai}

\paragraph{Notations.}
In this subsection we denote the network output by $\Phi_\sigma(\theta;x)$,
where $\theta=(a,W)\in\mathbb{R}^{m}\times\mathbb{R}^{m\times d}$ and
\begin{equation}
\Phi_\sigma(\theta;x)=\sum_{j=1}^m a_j\,\sigma(w_j^\top x),
\qquad
\sigma(t)=\sum_{k=0}^{\alpha} c_k t^k,\quad c_\alpha\neq 0,\ \alpha\ge 3.
\label{eq:app_nonhomog_model}
\end{equation}
Assume the data are bounded: $\|x_i\|_2\le R$.

We train with an exponential-type loss (as in \citet{cai2025implicit}):
\begin{equation}
L(\theta)=\frac1n\sum_{i=1}^n \exp\!\big(-y_i\,\Phi_\sigma(\theta;x_i)\big).
\label{eq:app_nonhomog_loss}
\end{equation}

\paragraph{Homogenization equals the top-degree part.}
Under uniform scaling $\theta\mapsto r\theta$, the leading homogeneous component is
\begin{equation}
\Phi_{\mathrm H}(\theta;x)
:=
\lim_{r\to\infty}\frac{\Phi_\sigma(r\theta;x)}{r^{M}}
\quad\text{with}\quad
M=\alpha+1,
\label{eq:app_nonhomog_homogenization_def}
\end{equation}
and for polynomial $\sigma$ this limit exists and equals the network with activation $\sigma_{\mathrm H}(t)=c_\alpha t^\alpha$:
\begin{equation}
\Phi_{\mathrm H}(\theta;x)
=
\sum_{j=1}^m a_j\,c_\alpha (w_j^\top x)^\alpha.
\label{eq:app_nonhomog_homogenization_poly}
\end{equation}
This is exactly the homogeneous model used in Section~\ref{sec:homogeneous}.

According to \citet{cai2025implicit},
Theorem~3.5 implies that the limiting direction of gradient flow satisfies the KKT conditions of the max-margin problem
defined by the homogenization $\Phi_{\mathrm H}$. To apply this theorem, the network should satisfy Assumptions~1-3 in \citet{cai2025implicit}. Now we check the assumptions.

\paragraph{Assumption 1: near-$M$-homogeneity.}
\citet{cai2025implicit} require (Definition~1 / Assumption~1) that $f(\theta;x)$ is near-$M$-homogeneous,
i.e.there exist polynomials $p,q$ of degree at most $M$ such that, for (sub)gradients $h\in\partial_\theta f(\theta;x)$,
\[
\big|\langle h,\theta\rangle - M f(\theta;x)\big|\le p'(\|\theta\|),
\qquad
\|h\|\le q'(\|\theta\|),
\qquad
|f(\theta;x)|\le q(\|\theta\|),
\]
and the derived function $p_a$ satisfies $p_a(x)/x^{M-1}\to 0$.

In our setting, $\Phi_\sigma(\theta;x)$ is a polynomial in $\theta$ and can be decomposed as a sum of homogeneous parts:
\begin{equation}
\Phi_\sigma(\theta;x)=\sum_{k=0}^{\alpha} c_k\,\Phi^{(k)}(\theta;x),
\qquad
\Phi^{(k)}(\theta;x):=\sum_{j=1}^m a_j (w_j^\top x)^k,
\label{eq:app_decompose_hom_parts}
\end{equation}
where $\Phi^{(k)}$ is homogeneous of degree $(k+1)$ in $\theta$ under uniform scaling.
By the property of homogeneous functions,
$\langle \nabla_\theta \Phi^{(k)}(\theta;x),\theta\rangle=(k+1)\Phi^{(k)}(\theta;x)$.
With $M=\alpha+1$, this yields
\begin{equation}
\langle \nabla_\theta \Phi_\sigma(\theta;x),\theta\rangle - M\Phi_\sigma(\theta;x)
=
-\sum_{k=0}^{\alpha-1} (\alpha-k)\,c_k\,\Phi^{(k)}(\theta;x).
\label{eq:app_nearhomog_euler}
\end{equation}
Since $\|x\|\le R$, each $\Phi^{(k)}(\theta;x)$ admits a polynomial growth bound in $\|\theta\|$ of degree $(k+1)\le \alpha$.
Consequently, there exists a constant $C=C(\alpha,\{c_k\},R,m)$ such that for all $(\theta,x)$,
\begin{equation}
\big|\langle \nabla_\theta \Phi_\sigma(\theta;x),\theta\rangle - (\alpha+1)\Phi_\sigma(\theta;x)\big|
\le C\,(1+\|\theta\|^\alpha).
\label{eq:app_nearhomog_bound}
\end{equation}
Moreover, $\nabla_\theta \Phi_\sigma(\theta;x)$ is also a polynomial in $\theta$ of degree at most $\alpha$, hence
\begin{equation}
\|\nabla_\theta \Phi_\sigma(\theta;x)\|\le C\,(1+\|\theta\|^\alpha),
\qquad
|\Phi_\sigma(\theta;x)|\le C\,(1+\|\theta\|^{\alpha+1}).
\label{eq:app_grad_value_bounds}
\end{equation}
Therefore, Assumption~1 of \citet{cai2025implicit} holds with $M=\alpha+1$ by taking, e.g.,
\begin{equation}
p(t)=C\,(t+t^{\alpha+1}),
\qquad
q(t)=C\,(1+t+t^{\alpha+1}).
\label{eq:app_pq_choice}
\end{equation}

\paragraph{Assumption 3: weak-homogeneous gradient.}
Assumption~3 requires that the gradient asymptotically matches that of the homogenization:
\begin{equation}
\lim_{r\to\infty}\left\|\frac{\nabla_\theta \Phi_\sigma(r\theta;x)}{r^{M-1}}-\nabla_\theta \Phi_{\mathrm H}(\theta;x)\right\|=0
\qquad(M=\alpha+1),
\label{eq:app_weak_homog_grad}
\end{equation}
uniformly in the sense specified in \citet{cai2025implicit}.
For polynomial $\Phi_\sigma$, \eqref{eq:app_weak_homog_grad} follows by direct degree counting:
$\nabla_\theta \Phi_\sigma(r\theta;x)$ is a polynomial in $r$ of maximum degree $(M-1)=\alpha$,
whose leading coefficient equals $\nabla_\theta \Phi_{\mathrm H}(\theta;x)$, and all lower-degree terms vanish after division by $r^\alpha$.

\paragraph{Assumption 2: strong separability.}
Assumption~2 requires that there exists some $s>0$ such that
\begin{equation}
L(\theta_s) < \frac{1}{n}\exp\!\big(-p_a(\|\theta_s\|)\big),
\label{eq:app_strong_separability}
\end{equation}
where $p_a$ is the function induced by $p$ in Assumption~1.
Using \eqref{eq:app_nonhomog_loss}, note that
\begin{equation}
\min_{i\in[n]} y_i\Phi_\sigma(\theta;x_i)
\;\ge\;
\log\frac{1}{nL(\theta)}.
\label{eq:app_margin_from_loss}
\end{equation}
Thus \eqref{eq:app_strong_separability} is equivalent to requiring $\log\frac{1}{nL(\theta_s)} > p_a(\|\theta_s\|)$.
In particular, Assumption~2 is ensured when
the homogenized model $\Phi_{\mathrm H}$ can separate the data with positive margin or the optimization dynamics indeed drives $L(\theta_t)$ sufficiently small.

\section{Proofs for the Sample Splitting Algorithm}

\subsection{Taylor Expansion for Sample Splitting and Optimal Splitting Strategy}
\label{app:split}
\begin{lemma}[Infinitesimal Splitting Expansion]
\label{thm: taylor theorem}
Under Assumption~\ref{ass:phi}, consider $x_i^{[j]}=x_i+\eta\delta_i^{[j]},\ \sum_{j=1}^{k_i}\lambda_i^{[j]}\delta_i^{[j]}=0, \forall i=1,\cdots, k$, where $\|\delta_i^{[j]}\|\le 1$ denote infinitesimal off-spring displacements. Define the reconstruction residual $r = \bs{\theta} -\sum_{i=1}^k\lambda_if(\bs{\theta};x_i)$,
and the splitting matrix for sample $x_i$ as 
\[
S(x_i)=-2\lambda_i\sum_{p=1}^Pr_p\nabla^2_{x_i}f_p(\bs{\theta};x_i).
\]
Then the reconstructed loss after infinitesimal splitting admits the expansion
\begin{align*}
    L(\{\bs{x}_i\}_{i=1}^k, \{\bs{\lambda}_i\}_{i=1}^k) 
&\le L(\{x_i\}_{i=1}^k, \{\lambda_i\}_{i=1}^k) 
+ \frac{\eta^2}{2}\sum_{i=1}^k\sum_{j=1}^{k_i}\frac{\lambda_i^{[j]}}{\lambda_i}\delta_i^{[j]^\top}S(x_i)\delta_i^{[j]} + \frac{\rho\eta^3}{6}
\end{align*}
Here, the second term captures the contribution of the splitting directions $\delta$ and depends on $x_i$ only through its corresponding splitting matrix $S(x_i)$.
\end{lemma}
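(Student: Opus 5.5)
We view the split loss as a function of the single scalar $\eta$ and Taylor expand it to second order around $\eta=0$. Define
\[
g(\eta):=L(\{\bs{x}_i\}_{i=1}^k,\{\bs{\lambda}_i\}_{i=1}^k)=\Big\|\bs{\theta}-\sum_{i=1}^k\sum_{j=1}^{k_i}\lambda_i^{[j]}f(\bs{\theta};x_i+\eta\delta_i^{[j]})\Big\|^2 .
\]
Since $\sum_j\lambda_i^{[j]}=\lambda_i$, we get $g(0)=L(\{x_i\},\{\lambda_i\})$. Taylor's theorem with Lagrange remainder gives $g(\eta)=g(0)+\eta g'(0)+\frac{\eta^2}{2}g''(0)+\frac{\eta^3}{6}g'''(\xi)$ for some $\xi\in(0,\eta)$. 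It therefore suffices to show three things: $g'(0)=0$, $g''(0)$ is at most the splitting-matrix term, and $|g'''|\le\rho$.

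\textbf{First order.} Write the split residual as $R(\eta)=\bs{\theta}-\sum_{i,j}\lambda_i^{[j]}f(\bs{\theta};x_i+\eta\delta_i^{[j]})$, so $R(0)=r$. Then
\[
g'(0)=-2\sum_{i,j}\lambda_i^{[j]}\, r^\top J_i\delta_i^{[j]},
\]
where $J_i=\nabla_x f(\bs{\theta};x_i)$. For each fixed $i$ the factor $r^\top J_i$ is the same for every offspring, so the $j$-sum collapses to $r^\top J_i\big(\sum_j\lambda_i^{[j]}\delta_i^{[j]}\big)=0$ by the constraint $\sum_j\lambda_i^{[j]}\delta_i^{[j]}=0$.

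\textbf{Second order.} Differentiating $\|R(\eta)\|^2$ twice gives
\[
g''(0)=2\|R'(0)\|^2+2\,r^\top R''(0).
\]
The first term involves $R'(0)=-\sum_{i,j}\lambda_i^{[j]}J_i\delta_i^{[j]}$, which vanishes by the same constraint. For the second term, $R''(0)=-\sum_{i,j}\lambda_i^{[j]}\big(\delta_i^{[j]\top}\nabla_x^2 f_p(\bs{\theta};x_i)\delta_i^{[j]}\big)_p$. Hence
\[
2\,r^\top R''(0)=-2\sum_{i,j}\lambda_i^{[j]}\sum_p r_p\,\delta_i^{[j]\top}\nabla^2 f_p(\bs{\theta};x_i)\,\delta_i^{[j]}
=\sum_{i,j}\frac{\lambda_i^{[j]}}{\lambda_i}\,\delta_i^{[j]\top}S(x_i)\,\delta_i^{[j]},
\]
using the definition of $S(x_i)$. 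So in fact $g''(0)$ equals the claimed quadratic term exactly, not just bounds it.

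\textbf{Third order (main obstacle).} Assumption~\ref{ass:phi} bounds $\nabla_x^3L$ for the unsplit objective, on $k$ samples. The split objective lives on $\sum_i k_i$ samples with weights $\lambda_i^{[j]}$, so the assumption does not apply verbatim. My plan is to read Assumption~\ref{ass:phi} as holding uniformly over any number of samples and any weights, in particular for the augmented configuration. Then $g'''(\xi)=\nabla^3L[\Delta,\Delta,\Delta]$ with $\Delta=(\delta_i^{[j]})_{i,j}$, and $|g'''(\xi)|\le\rho\|\Delta\|^3$.

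There is a normalization gap: each $\|\delta_i^{[j]}\|\le1$, but $\|\Delta\|$ can exceed $1$. To close it I would interpret the norm bound in the assumption as an operator bound per unit block displacement, absorbing the constant into $\rho$. If the authors intend a cleaner route, one could instead treat the split one sample at a time, which controls $\|\Delta\|$ blockwise. With $|g'''|\le\rho$ in hand, the remainder is at most $\rho\eta^3/6$, and the stated inequality follows.
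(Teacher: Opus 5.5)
Your argument matches the paper's proof. The paper also expands the augmented loss about the configuration where all offspring coincide with their parent. It then uses $\sum_j\lambda_i^{[j]}\delta_i^{[j]}=0$ to cancel the first-order term and every $J^\top J$ (Gauss--Newton) block of the Hessian, which leaves exactly the $S$-term. Your computation of $g'(0)$, $2\|R'(0)\|^2$ and $2r^\top R''(0)$ is the same calculation, already contracted with the stacked displacement $\Delta$.

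For the remainder, the paper simply writes $+\frac{\rho\eta^3}{6}$. In doing so it silently makes both moves you flag: it applies the $\rho$ of Assumption~\ref{ass:phi} to the augmented $\sum_i k_i$-sample objective, and it treats $\Delta$ as having unit size. So that concern is a looseness in the lemma as stated, not a defect of your proof.

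One correction: your fallback of splitting one sample at a time does not fix the normalization. Even the optimal binary split $\delta_i^{[1]}=-\delta_i^{[2]}=v$ already has $\|\Delta\|=\sqrt{2}$. The stated constant $\frac{\rho\eta^3}{6}$ follows only under a per-block reading of the third-derivative bound, e.g.\ $|\nabla^3L[u,u,u]|\le\rho\max_b\|u_b\|^3$ over offspring blocks $u_b$. Under the Euclidean operator-norm reading you get $\frac{\rho\|\Delta\|^3\eta^3}{6}$ instead. That factor cannot be ``absorbed into $\rho$'' without changing the statement, and with it the step-size condition used in Theorem~\ref{thm:global}.
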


    \begin{proof}
    Denote $J(x_i) := \nabla_{x_i}f(\bs{\theta};x_i) ,\ T(x_i):= T(x_i,x_i)=2\lambda_i^2J(x_i)^\top J(x_i)$ and $T(x_i,x_j):= 2\lambda_i\lambda_jJ(x_i)^\top J(x_j), i\ne j$. By direct calculation, the gradient and Hessian of reconstruction loss $L(\{x_i\}_{i=1}^k, \{\lambda_i\}_{i=1}^k) = \|\bs{\theta} - \sum_{i=1}^k\lambda_i f(\bs{\theta};x_i)\|^2$ satisfy
    \[
    \nabla_{x_i} L(\{x_i\}_{i=1}^k, \{\lambda_i\}_{i=1}^k)= -2\lambda_i\nabla_{x_i}f(\bs{\theta};x_i)^\top r= -2J(x_i)^\top r,
    \]
    \[
    \nabla^2_{x_i} L(\{x_i\}_{i=1}^k, \{\lambda_i\}_{i=1}^k)= 2\lambda_i^2\nabla_{x_i}f(\bs{\theta};x_i)^\top \nabla_{x_i}f(\bs{\theta};x_i)-2\lambda_i\sum_{p=1}^Pr_p\nabla^2_{x_i}f_p(\bs{\theta};x_i) = T(x_i)+S(x_i).
    \]

    After splitting, the augmented loss is $L(\{\bs{x}_i\}_{i=1}^k, \{\bs{\lambda}_i\}_{i=1}^k) = \|\bs{\theta} - \sum_{i=1}^k\sum_{j=1}^{k_i}\lambda_i^{[j]} f(\bs{\theta};x_i^{[j]})\|^2$. As the weights satisfy $\sum_{j=1}^{k_i}\lambda_i^{[j]}/\lambda_i=1, \lambda_i^{[j]}>0$, we have $L({\{\bs{1}_{m_i}\otimes{x}_i\}_{i=1}^k},{\{\bs{\lambda_i}\}_{i=1}^k})=L(\{x_i\}_{i=1}^k, \{\lambda_i\}_{i=1}^k)$. Taking the gradient of $L({\{\bs{x}_i\}_{i=1}^k},{\{\bs{\lambda}_i\}_{i=1}^k})$ when $\bs{x}_i=\bs{1}_{k_i}\otimes{x}_i$ i.e. $x_i^{[j]}=x_i,j=1,\dots,k_i$, we have
    \[
    \nabla_{x_i^{[j]}}L({\{\bs{x}_i\}_{i=1}^k},{\{\bs{\lambda}_i\}_{i=1}^k})=-2\lambda_i^{[j]}\nabla_{x_i^{[j]}}f(\bs{\theta};x_i^{[j]})^\top r = \frac{\lambda_i^{[j]}}{\lambda_i}\nabla_{x_i} L(\{x_i\}_{i=1}^k, \{\lambda_i\}_{i=1}^k),
    \]
    \[
    \nabla^2_{x_i^{[j]}}L({\{\bs{x}_i\}_{i=1}^k},{\{\bs{\lambda}_i\}_{i=1}^k})= 2(\lambda_i^{[j]})^2\nabla_{x_i^{[j]}}f(\bs{\theta};x_i^{[j]})^\top \nabla_{x_i^{[j]}}f(\bs{\theta};x_i^{[j]})-2\lambda_i^{[j]}\sum_{p=1}^Pr_p\nabla^2_{x_i^{[j]}}f_p(\bs{\theta};x_i^{[j]}) = (\frac{\lambda_i^{[j]}}{\lambda_i})^2T(x_i) + \frac{\lambda_i^{[j]}}{\lambda_i}S(x_i).
    \]
    For $i\ne r$ or $j\ne s$,
    \[
    \nabla_{x_i^{[j]},x_r^{[s]}}L({\{\bs{x}_i\}_{i=1}^k},{\{\bs{\lambda}_i\}_{i=1}^k})= 2(\lambda_i^{[j]}\lambda_r^{[s]})\nabla_{x_i^{[j]}}f(\bs{\theta};x_i^{[j]})^\top \nabla_{x_r^{[s]}}f(\bs{\theta};x_r^{[s]}) = \frac{\lambda_i^{[j]}\lambda_r^{[s]}}{\lambda_i\lambda_r}T(x_i,x_r).
    \]
    Note that $x_i^{[j]}=x_i+\eta\delta_i^{[j]},\ \sum_{j=1}^{k_i}\lambda_i^{[j]}\delta_i^{[j]}=0,\ \sum_{j=1}^{k_i}\lambda_i^{[j]}=\lambda_i, \forall i=1,\cdots, m$, we obtain the Taylor expansion at $\eta=0$:
    \begin{align*}
        &L({\{\bs{x}_i\}_{i=1}^k},{\{\bs{\lambda}_i\}_{i=1}^k}) - L(\{x_i\}_{i=1}^k, \{\lambda_i\}_{i=1}^k) = L({\{\bs{1}_{k_i}\otimes{x}_i+\eta \bs{\delta}_i\}_{i=1}^k},{\{\bs{\lambda}_i\}_{i=1}^k}) - L(\{x_i\}_{i=1}^k, \{\lambda_i\}_{i=1}^k)\\
        &\le \eta \sum_{i=1}^k\sum_{j=1}^{k_i}\nabla_{x_i^{[j]}}L({\{\bs{1}_{k_i}\otimes{x}_i\}_{i=1}^k},{\{\bs{\lambda}_i\}_{i=1}^k})^\top \delta_i^{[j]} \\
        &\quad + \frac{\eta^2}{2}\sum_{i,r=1}^{k}\sum_{j=1}^{k_i}\sum_{s=1}^{k_r}\delta_i^{[j]^\top} \nabla_{x_i^{[j]},x_r^{[s]}}L({\{\bs{1}_{k_i}\otimes{x}_i\}_{i=1}^k},{\{\bs{\lambda}_i\}_{i=1}^k})\delta_r^{[s]} + \frac{\rho\eta^3}{6}\\
        &= \eta \sum_{i=1}^k\sum_{j=1}^{k_i}\frac{\lambda_i^{[j]}}{\lambda_i}\nabla_{x_i}L(\{x_i\}_{i=1}^k, \{\lambda_i\}_{i=1}^k)^\top \delta_i^{[j]}
        + \frac{\eta^2}{2}\sum_{i=1}^{k}\sum_{j=1}^{k_i}\frac{\lambda_i^{[j]}}{\lambda_i}\delta_i^{[j]^\top} S(x_i)\delta_i^{[j]}\\
        &\quad + \frac{\eta^2}{2}\sum_{i,r=1}^{k}\sum_{j=1}^{k_i}\sum_{s=1}^{k_r}\frac{\lambda_i^{[j]}\lambda_r^{[s]}}{\lambda_i\lambda_k}\delta_i^{[j]^\top} \nabla_{x_i^{[j]},x_r^{[s]}}T(x_i,x_r)\delta_r^{[s]} + \frac{\rho\eta^3}{6}\\
        &= \frac{\eta^2}{2}\sum_{i=1}^{k}\sum_{j=1}^{k_i}\frac{\lambda_i^{[j]}}{\lambda_i} (\delta_i^{[j]})^\top S(x_i)(\delta_i^{[j]}) + \frac{\rho\eta^3}{6},
    \end{align*}
    where the last equation is because $\sum_{j=1}^{k_i}\lambda_i^{[j]}\delta_i^{[j]}=0, \sum_{s=1}^{k_r}\lambda_r^{[s]}\delta_r^{[s]}=0.$

    We have
    \begin{align*}
        L({\{\bs{x}_i\}_{i=1}^k},{\{\bs{\lambda}_i\}_{i=1}^k}) \le L(\{x_i\}_{i=1}^k, \{\lambda_i\}_{i=1}^k) + \frac{\eta^2}{2}\sum_{i=1}^k\sum_{j=1}^{k_i}\frac{\lambda_i^{[j]}}{\lambda_i}\delta_i^{[j]^\top}S(x_i)\delta_i^{[j]} + \frac{\rho\eta^3}{6}
    \end{align*}
\end{proof}

    \begin{theorem}[Optimal Infinitesimal Splitting Strategy]
    \label{optimal theorem}
    Let $\lambda_{min}(S(x_i))$ denote the smallest eigenvalue of $S(x_i)$ and $v_{min}(x_i)$ its corresponding eigenvector. Then the optimal infinitesimal splitting strategy for each sample $x_i$ is given as follows.

    \begin{enumerate}
        \item Splitting-stable case: If $\lambda_{min}(S(x_i))\geq 0$, then any infinitesimal splitting of $x_i$ cannot decrease the reconstruction loss.
        \item Optimal splitting case: If $\lambda_{min}(S(x_i)) < 0$, the maximal decrease in loss subject to
        $\|\delta_i^{[j]}\| \le 1$ is achieved by a binary split
        \[
        m_i = 2, \quad \lambda_i^{[1]} = \lambda_i^{[2]} = \frac{1}{2}\lambda_i, \quad \text{and} \quad \delta_i^{[1]} = v_{\min}(S(x_i)), \quad \delta_i^{[2]} = -v_{\min}(S(x_i)).
        \]
    \end{enumerate}
    In this case, the decrease in loss for splitting sample $x_i$ is $ \frac{\eta^2}{2} \lambda_{\min}(S(x_i)) < 0$.
\end{theorem}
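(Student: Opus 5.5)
The natural route is to take Lemma~\ref{thm: taylor theorem} as the starting point. To leading order in $\eta$, the loss change after splitting $x_i$ is the quadratic form
\[
\Delta_i(\{\lambda_i^{[j]},\delta_i^{[j]}\}) := \frac{\eta^2}{2}\sum_{j=1}^{k_i}\frac{\lambda_i^{[j]}}{\lambda_i}\,\delta_i^{[j]\top}S(x_i)\,\delta_i^{[j]},
\]
because the first-order term vanishes under $\sum_j\lambda_i^{[j]}\delta_i^{[j]}=0$. The cross terms through $T$ also vanish, as the derivation in the lemma shows. The cubic remainder $\rho\eta^3/6$ is of higher order, so ``infinitesimal'' optimality refers to minimizing the $\eta^2$ coefficient. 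The problem therefore reduces to a finite-dimensional minimization. We minimize $\sum_j w_j\,\delta_j^\top S\delta_j$ over weights $w_j:=\lambda_i^{[j]}/\lambda_i>0$ with $\sum_j w_j=1$, displacements with $\|\delta_j\|\le 1$, and the constraint $\sum_j w_j\delta_j=0$. The count $k_i$ is also free. Different samples decouple, because the coefficient is a sum over $i$; so we can treat each $x_i$ separately.

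For the lower bound we use the Rayleigh quotient, $\delta_j^\top S\delta_j\ge \lambda_{\min}(S)\|\delta_j\|^2$. In the splitting-stable case $\lambda_{\min}\ge 0$, every term is then nonnegative, so $\Delta_i\ge 0$ and no infinitesimal split decreases the loss. That proves part 1. In the case $\lambda_{\min}<0$, we combine the Rayleigh bound with $\|\delta_j\|^2\le 1$ and the fact that the weights are a convex combination. This gives $\sum_j w_j\delta_j^\top S\delta_j\ge \lambda_{\min}\sum_j w_j\|\delta_j\|^2\ge \lambda_{\min}$. Hence the coefficient of $\eta^2/2$ can never fall below $\lambda_{\min}(S(x_i))$.

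Achievability comes next. Take $k_i=2$, $w_1=w_2=\tfrac12$, and $\delta_{1,2}=\pm v_{\min}$. The balance constraint $\tfrac12 v_{\min}-\tfrac12 v_{\min}=0$ holds, each norm equals $1$, and the sum equals $\tfrac12\lambda_{\min}+\tfrac12\lambda_{\min}=\lambda_{\min}$. So the binary symmetric split attains the lower bound, and the leading-order decrease is $\frac{\eta^2}{2}\lambda_{\min}(S(x_i))<0$, which is the stated value.

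The main obstacle is the precise sense of ``optimal.'' Equality in the bound forces every $\delta_j$ to be a unit vector in the $\lambda_{\min}$-eigenspace, but the weights themselves are not pinned down. For example, the weights $(1/3,2/3)$ with displacements $\delta=(v,-v/2)$ violate the unit-norm requirement on the second offspring and so are strictly worse. More generally, any balanced configuration of unit vectors in the eigenspace also attains the bound. I would therefore state that the binary equal-weight split achieves the maximal decrease, rather than claim it is the unique optimizer. I would also remark that the $O(\rho\eta^3)$ term is dominated once $\eta<3|\lambda_{\min}|/\rho$, so the decrease is strict for small $\eta$.
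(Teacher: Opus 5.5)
Your proposal is correct and follows essentially the same route as the paper. You reduce, via the infinitesimal splitting lemma, to the per-sample $\eta^2$ coefficient, lower-bound it by $\lambda_{\min}(S(x_i))$ using the Rayleigh quotient and the convex weights, attain the bound with the symmetric binary split, and decouple samples by additivity. Your version is slightly more careful than the paper's: you handle the constraint $\|\delta_i^{[j]}\|\le 1$ (the paper's proof tacitly takes unit vectors), you derive part~1 explicitly, and you rightly note that the equal-weight binary split is an optimizer rather than the unique one.
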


\begin{proof}
    By Lemma~\ref{thm: taylor theorem}, the second-order decrease in loss due to splitting sample $x_i$ is $\frac{\eta^2}{2}\sum_{j=1}^{k_i}\frac{\lambda_i^{[j]}}{\lambda_i}\delta_i^{[j]^\top}S(x_i)\delta_i^{[j]}$ with $\|\delta_i^{[j]}\|=1$.Since $\delta_i^{[j]\top} S(x_i) \delta_i^{[j]} \ge \lambda_{\min}(S(x_i))$ for all unit vectors $\delta_i^{[j]}$, we have
    \[
    \frac{\eta^2}{2}\sum_{j=1}^{k_i}\frac{\lambda_i^{[j]}}{\lambda_i}\delta_i^{[j]^\top}S(x_i)\delta_i^{[j]}\geq \frac{\eta^2}{2}\lambda_{\min}(S(x_i))
    \]
    Equality is achieved when $k_i=2$, $ \lambda_i^{[1]} = \lambda_i^{[2]} = \frac{1}{2}\lambda_i, \delta_i^{[1]} = v_{\min}(S(x_i)), \delta_i^{[2]} = -v_{\min}(S(x_i)).$ Note that additivity of the second-order loss decrease across samples implies that optimizing each sample independently yields a globally optimal splitting strategy.
\end{proof}

\subsection{Splitting Matrix and Negative Curvature}
\label{app:curvature}

We relate the splitting matrix $S(x_i)$ to the curvature of the full Hessian $\nabla_x^2 L$.

\begin{lemma}
\label{lem:block-curvature}
Let $H=\nabla_x^2 L(x,\lambda)$ be the Hessian with respect to $x$.
Then for any $i$,
\[
\lambda_{\min}(H)
\geq
\min_i\lambda_{\min}(S(x_i)).
\]where
\[
S(x_i)=-2\lambda_i\sum_{p=1}^Pr_p\nabla^2_{x_i}f_p(\bs{\theta};x_i),\quad r_p = \bs{\theta} -\sum_{i=1}^m\lambda_if_p(\bs{\theta};x_i),
\]
and $f_p(\cdot)$ is the $p$-th element of vetor-valued function $f(\cdot)$.
\end{lemma}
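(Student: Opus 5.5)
The plan is to compute the full Hessian $H=\nabla_x^2 L(x,\lambda)$ explicitly in block form and to show that it splits as a positive semidefinite Gauss--Newton part plus a block-diagonal part whose diagonal blocks are exactly the splitting matrices $S(x_i)$. The eigenvalue bound then follows from a Rayleigh-quotient argument.

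\emph{Step 1: block structure of $H$.} Write $J(x_i):=\nabla_{x_i} f(\bs{\theta};x_i)\in\mathbb{R}^{P\times d}$ and $r=\bs{\theta}-\sum_{i}\lambda_i f(\bs{\theta};x_i)$. As in the proof of Lemma~\ref{thm: taylor theorem}, $\nabla_{x_i}L=-2\lambda_i J(x_i)^\top r$. Differentiating once more gives the blocks of $H$.

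The diagonal block is
\[
H_{ii}=2\lambda_i^2 J(x_i)^\top J(x_i)+S(x_i).
\]
The off-diagonal block, for $i\neq j$, is
\[
H_{ij}=2\lambda_i\lambda_j J(x_i)^\top J(x_j).
\]
There is no second-derivative term in $H_{ij}$ for $i\neq j$, because $f(\bs{\theta};x_i)$ depends only on $x_i$. Checking this carefully is the only place where real care is needed. Defining $\tilde J:=[\lambda_1 J(x_1),\dots,\lambda_k J(x_k)]\in\mathbb{R}^{P\times kd}$, this gives
\[
H=2\tilde J^\top\tilde J+\mathrm{blockdiag}\big(S(x_1),\dots,S(x_k)\big).
\]

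\emph{Step 2: Rayleigh quotient.} Take any unit vector $u=(u_1^\top,\dots,u_k^\top)^\top\in\mathbb{R}^{kd}$. Then
\[
u^\top H u=2\|\tilde J u\|^2+\sum_{i=1}^k u_i^\top S(x_i)u_i\ \ge\ \sum_{i=1}^k \lambda_{\min}(S(x_i))\|u_i\|^2 .
\]
Since $\sum_i\|u_i\|^2=1$, the right-hand side is a convex combination of the numbers $\lambda_{\min}(S(x_i))$. It is therefore at least $\min_i\lambda_{\min}(S(x_i))$.

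\emph{Step 3: conclusion.} Minimizing the left-hand side over unit $u$ yields $\lambda_{\min}(H)\ge\min_i\lambda_{\min}(S(x_i))$. Equivalently, this is Weyl's inequality applied to $H=A+B$ with $A=2\tilde J^\top\tilde J\succeq 0$, together with the fact that the spectrum of a block-diagonal matrix is the union of the spectra of its blocks.

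The main obstacle is only the bookkeeping in Step 1: confirming that the cross-sample terms come solely from the Gauss--Newton product, and hence are absorbed into the PSD part. Once that decomposition is in hand, the rest is immediate.
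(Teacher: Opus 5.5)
Your proposal is correct and follows the same route as the paper. The paper also writes $H_{ii}=2J_i^\top J_i+S(x_i)$ and $H_{ij}=2J_i^\top J_j$ (with $J_i$ absorbing the factor $\lambda_i$, as in your $\tilde J$) and drops the Gauss--Newton term to get $v^\top H v\ge\sum_i v_i^\top S(x_i)v_i$; your convex-combination step via $\sum_i\|u_i\|^2=1$ just spells out the final inequality the paper leaves implicit.
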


\begin{proof}
    Denote $J_i:= \nabla_{x_i}\lambda_if(\bs{\theta};x_i), H_{ij}:=\nabla_{x_i,x_j}L(x,\lambda)$, then we have
    \[
    H_{ii} = 2J_i^\top J_i+S(x_i),\quad H_{ij}=2J_i^\top J_j,\ i\ne j.
    \]
    Therefore, for any $v=(v_1,\dots,v_k)$,
    \[
    v^\top H v=2\|\sum_{i=1}^k J_iv_i\|^2+\sum_{i=1}^kv_i^\top S(x_i)v_i\geq \sum_{i=1}^kv_i^\top S(x_i)v_i
    \]
    Therefore, 
    \[
    \lambda_{\min}(H)
    \geq
    \min_i\lambda_{\min}(S(x_i)).
    \]
\end{proof}

\subsection{Convergence Analysis}
\label{app:convergence}
    \begin{theorem}
Under Assumption~\ref{ass:phi}, suppose initial reconstruction loss is $L_0$. Let $\eta_g \le 1/l$ be the gradient descent step size, $\eta \le \frac{3}{2}\sqrt{\epsilon/\rho}$ the splitting step size,
and $\epsilon_H = \sqrt{\rho \epsilon}$ be the stopping threshold. Then the proposed algorithm will reach an $\epsilon$-second-order stationary point in at most
    \[
    O(\frac{L_0}{\sqrt{\rho}\eta^2}\epsilon^{-1/2})
    \]
splitting steps and $O(\epsilon^{-2})$ total iterations.
\end{theorem}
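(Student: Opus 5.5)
The plan is a potential-function argument on the reconstruction loss $L\ge 0$, accounting for the decrease produced by each of the two phases separately. Splitting preserves the value of $L$ (Lemma~\ref{thm: taylor theorem} with all offspring coinciding), so the loss after any sequence of gradient and splitting steps never exceeds $L_0$. Since $L\ge 0$, the total decrease accumulated over the whole run is at most $L_0$. I will show that each gradient step with a large gradient, and each splitting step, removes a fixed quantum of loss. The two counts then follow by dividing $L_0$ by these quanta.

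\emph{Phase I.} By Assumption~\ref{ass:phi}, $\nabla_x L$ is $l$-Lipschitz. With step size $\eta_g\le 1/l$, the descent lemma gives
\[
L(x^{t+1},\lambda)\le L(x^t,\lambda)-\tfrac{\eta_g}{2}\|\nabla_x L(x^t,\lambda)\|^2 .
\]
Each iteration in which $\|\nabla_x L\|>\epsilon$ therefore decreases $L$ by at least $\eta_g\epsilon^2/2$. If $\lambda$ is also updated, for instance by an exact least-squares step, this can only decrease $L$ further, so it does not affect the bound. Summing over all Phase I iterations of the entire run shows there are at most $2L_0/(\eta_g\epsilon^2)=O(\epsilon^{-2})$ such iterations.

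\emph{Phase II.} Phase II is entered only when $\|\nabla_x L\|\le\epsilon$. If $\min_i\lambda_{\min}(S(x_i))\ge-\epsilon_H=-\sqrt{\rho\epsilon}$, then Lemma~\ref{lem:block-curvature} gives $\lambda_{\min}(\nabla_x^2L)\ge-\sqrt{\rho\epsilon}$. Together with the gradient bound, this means the current point is an $\epsilon$-second-order stationary point in the sense of Definition~\ref{def:sosp}, and we stop.

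Otherwise some $i$ has $\lambda_{\min}(S(x_i))<-\sqrt{\rho\epsilon}$, and we split $x_i$ along $\pm v_{\min}$ with equal weights. Combining Lemma~\ref{thm: taylor theorem} with Theorem~\ref{optimal theorem}, the new loss is at most the old loss plus
\[
\tfrac{\eta^2}{2}\lambda_{\min}(S(x_i))+\tfrac{\rho\eta^3}{6} < -\tfrac{\eta^2}{2}\sqrt{\rho\epsilon}+\tfrac{\rho\eta^3}{6}.
\]
The cubic remainder must be absorbed into the quadratic gain. Since $\eta\le\tfrac32\sqrt{\epsilon/\rho}$, we have $\rho\eta^3/6\le\tfrac14\eta^2\sqrt{\rho\epsilon}$, so the net decrease is at least $\tfrac{\eta^2}{4}\sqrt{\rho\epsilon}$. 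The number of splits is therefore at most
\[
\frac{4L_0}{\eta^2\sqrt{\rho\epsilon}}=O\!\left(\frac{L_0}{\sqrt\rho\,\eta^2}\epsilon^{-1/2}\right).
\]
Each split is one iteration. With $\eta=\Theta(\sqrt{\epsilon/\rho})$ this bound becomes $O(\epsilon^{-3/2})$, which is dominated by $O(\epsilon^{-2})$. Hence the total iteration count is $O(\epsilon^{-2})$, and the algorithm cannot cycle because $L$ strictly decreases by a fixed amount in every non-terminal step.

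\emph{Main obstacle.} The delicate step is making the Taylor bound legitimate on the lifted space after a split. The number of samples grows with each split, so I must check that Assumption~\ref{ass:phi} holds uniformly in the number of samples: the third-derivative bound $\rho$ has to apply on the augmented variables and along the specific direction $\bs\delta$. This is where the constant in $\rho\eta^3/6$ comes from. I would handle it by noting that the lifted loss has the same form \eqref{formula: rec loss} with $k$ replaced by $\sum_i k_i$, so the assumption, read as holding for every $k$, applies directly. A second, minor point is that the eigenvector $v_{\min}$ is a unit vector, so $\|\delta\|\le1$ as the lemma requires. I also need the split to be performed at a point where $\|\nabla_x L\|\le\epsilon$; this is not strictly necessary, because the first-order term already vanishes under the balanced condition $\sum_j\lambda_i^{[j]}\delta_i^{[j]}=0$.
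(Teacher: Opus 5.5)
Your proposal is correct and follows the paper's proof essentially step for step: the descent lemma caps the Phase~I iterations at $2L_0/(\eta_g\epsilon^2)$, Lemma~\ref{lem:block-curvature} certifies $\epsilon$-second-order stationarity at termination, and Lemma~\ref{thm: taylor theorem} with Theorem~\ref{optimal theorem} and $\eta\le\frac32\sqrt{\epsilon/\rho}$ gives a decrease of at least $\frac{\eta^2}{4}\epsilon_H$ per split. You are a bit more careful than the paper in noting that the $O(\epsilon^{-2})$ total needs $\eta=\Theta(\sqrt{\epsilon/\rho})$ and not just the upper bound; one caveat, inherited from the paper's lemma, is that the stacked displacement $(v,-v)$ has norm $\sqrt2$, so a direct third-order Taylor bound gives a remainder of $\frac{\sqrt2}{3}\rho\eta^3$ rather than $\frac{\rho\eta^3}{6}$, which changes only the constant in the admissible splitting step size and not the rates.
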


\begin{proof}
The algorithm alternates between two phases.

\textbf{Phase I (first-order descent).}
While $\|\nabla_x L(x,\lambda)\|>\epsilon$, gradient descent with $\eta_g\le 1/l$ ensures
\begin{align*}
    L(x^{t+1},\lambda^{t+1})&\le L(x^{t+1},\lambda^{t})\\
    &\le L(x^t,\lambda^t) 
+\nabla_x L(x^t,\lambda^t)^\top(x^{t+1}-x^t)+\frac{l}{2}\|x^{t+1}-x^t\|^2\\
&= L(x^t,\lambda^t) 
-\eta_g \|\nabla_xL(x^t,\lambda^t)\|^2+\frac{l\eta_g^2}{2}\|\nabla_xL(x^t,\lambda^t)\|^2\\
&\le L(x^t,\lambda^t)
-\frac{\eta_g}{2} \|\nabla_x L(x^t,\lambda^t)\|^2,
\end{align*}
Since $L\geq 0$, Phase~I can be executed at most $O(\frac{L_0}{\eta_g }\epsilon^{-2})$ iterations.

\textbf{Phase II (sample splitting).}
When $\|\nabla_x L(x,\lambda)\|\le \epsilon$, the algorithm evaluates the splitting matrices. If $\lambda_{\min}(S(x_i))\ge -\epsilon_H$ for all $i$, the algorithm terminates. 

Otherwise, there at least exist a sample $x_i$ with $\lambda_{\min}(S(x_i))<-\epsilon_H$. Suppose that $\bs{x}^{[i]}$ is the sample vector we achieve by splitting only sample $x_i$, and $\bs{x}$ is the sample vector we achieve after the splitting step. If we adopt the splitting strategy in~\ref{alg:splitting}, we have by  Lemma~\ref{lem:block-curvature} and Theorem~\ref{optimal theorem} that
\begin{align*}
    L(\bs{x},\bs{\lambda})-L(x,\lambda)
&\le
L(\bs{x}^{[i]},\bs{\lambda}^{[i]})-L(x,\lambda)\\
&\le \frac{\eta^2}{2}\lambda_{\min}(S(x_i)) + \frac{\rho}{6}\eta^3\\
&\le -\frac{\eta^2\epsilon_H}{2} + \frac{\rho}{6}\eta^3\le -\frac{\epsilon_H\eta^2}{4}
\end{align*}
where the last inequality follows from the choice of $\eta$. Since $L\geq 0$, the algorithm must terminate within the following number of iterations of splitting:
\[
\frac{L_0-0}{\epsilon_H\eta^2/4}= O(\frac{L_0}{\sqrt{\rho}\eta^2}\epsilon^{-1/2})
\]
The total number of iterations is the summation of number of iterations in Phase I and Phase II, i.e.
\[
2\frac{L_0}{\eta_g \epsilon^{2}}+\frac{L_0}{\epsilon_H\eta^2/4}=O(\epsilon^{-2}),
\]
and does not depend explicitly on intrinsic dimension $d$.

Finally, we will ensure that when algorithm terminates, the point is $\epsilon$-second-order stationary. Upon termination, we have $\|\nabla_x L(x,\lambda)\| \le \epsilon$ and $\lambda_{\min}(S(x_i)) \ge -\epsilon_H$ for all $i$. From Lemma~\ref{lem:block-curvature}, this implies $\lambda_{\min}(\nabla^2_xL(x,\lambda))\le -\epsilon_H=-\sqrt{\rho\epsilon}$. According to the definition, the algorithm reaches a $\epsilon$-second-order stationary point.

\end{proof}

\section{Experiment Details and Additional Results}
\label{app: exp}
This appendix provides additional experimental results and implementation details that complement the main text. Unless otherwise stated, we follow the experimental setups and hyperparameter choices of the original reconstruction methods, and apply sample splitting as an add-on optimization mechanism without modifying the original objectives or assumptions.

\subsection{Implementation and Computational Details}
All experiments are conducted on a single NVIDIA V100 GPU. A single reconstruction run typically takes around 30 minutes, depending on the dataset size and the reconstruction method.

For all splitting-based experiments, the minimum eigenvalue of the splitting matrix is approximated using the Lanczos method \cite{lanczos1950} with a small number of iterations (typically 20). In practice, computing the splitting direction is fast (approximately 2 minutes for 1000 reconstructed samples) and incurs negligible overhead compared to gradient-based optimization. As a result, sample splitting does not constitute a computational bottleneck.

Regarding the splitting procedure, we perform a sample splitting step every $20000$--$40000$ gradient descent iterations, depending on the observed rate of loss decrease. The splitting threshold is set at $\lambda_* = -0.1$, and the total number of split samples is capped at $50\%$ of the current batch to control sample growth. For the splitting step size, we perform a line search along the splitting direction, with the maximum step size capped at $0.01$ to ensure numerical stability.

For evaluation, we use L2 distance on MNIST and SSIM on CIFAR-10, as these metrics better correlate with perceptual reconstruction quality for grayscale digits and natural color images respectively, and this choice is consistent with prior reconstruction works.

\subsection{Additional Results for \citet{haim2022}}
We first report extended results for the KKT-based reconstruction method of \citet{haim2022}, which considers homogeneous neural networks trained for binary classification.

\textbf{MNIST.}
We consider MLPs trained on 500 MNIST samples, following the settings reported in the original paper. 
Figure~\ref{fig:haim_metric} presents per-sample L2 distance comparisons before and after sample splitting. Most samples either improve or remain largely unchanged after splitting, indicating that splitting rarely degrades reconstruction quality.
Figure~\ref{fig:haim_top30} visualizes the top reconstructed samples, selected using the same criterion as Figure~\ref{pic:performance} in the main text. Qualitatively, improvements from splitting are most visible in background uniformity, digit sharpness, and contrast, rather than large structural changes.
Figure~\ref{fig:haim_traj} further shows optimization trajectories of representative MNIST samples. After splitting events, trajectories often exhibit renewed progress in the reconstruction metric, supporting the interpretation that splitting refines ambiguous reconstructions that have plateaued under gradient-based updates.

\textbf{CIFAR-10.} 
Reconstruction on CIFAR-10 with 500 training samples is more challenging. We find that reconstruction quality is highly sensitive to hyperparameter choices, and a single run often yields only a small number of reasonably reconstructed samples. Figure~\ref{fig:haim_cifar_metric} shows that even in this regime, sample splitting improves reconstruction metrics for many samples. However, as illustrated in Figure~\ref{fig:haim_cifar_vis}, overall visual quality remains limited due to the instability of the baseline method. To better isolate the effect of sample splitting from hyperparameter tuning, we additionally consider smaller training sets of size 100. In this setting, splitting yields modest improvements on relatively well-reconstructed samples, as shown in Figures~\ref{fig:haim2} and~\ref{fig:haim1}.

\begin{figure}[htbp]
    \centering
     \begin{subfigure}{0.3\textwidth}
        \centering
        \includegraphics[width=\linewidth]{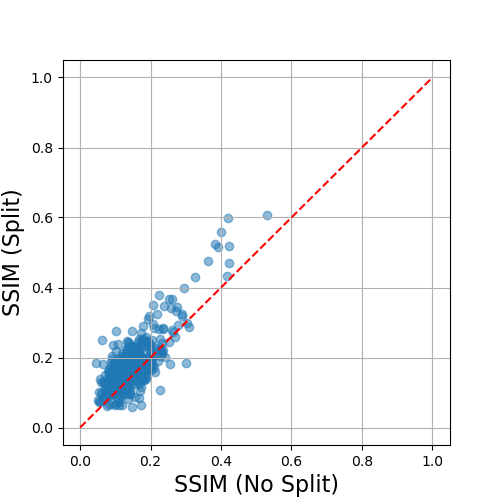}
        \caption{CIFAR-10 (500 training samples)}
        \label{fig:haim_cifar_metric}
    \end{subfigure}
    \hfill
    \begin{subfigure}{0.3\textwidth}
        \centering
        \includegraphics[width=\linewidth]{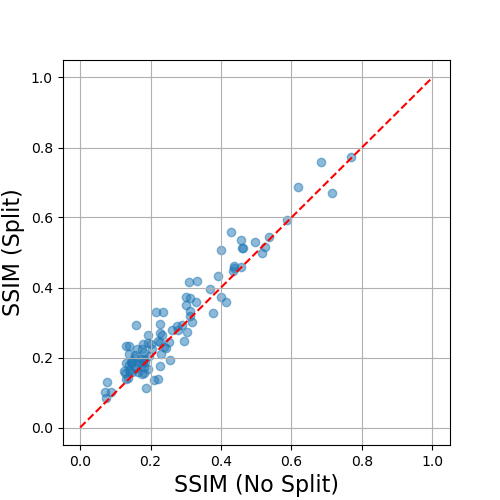}
        \caption{CIFAR-10 (100 training samples)}
        \label{fig:haim2}
    \end{subfigure}
    \hfill
    \begin{subfigure}{0.3\textwidth}
        \centering
        \includegraphics[width=\linewidth]{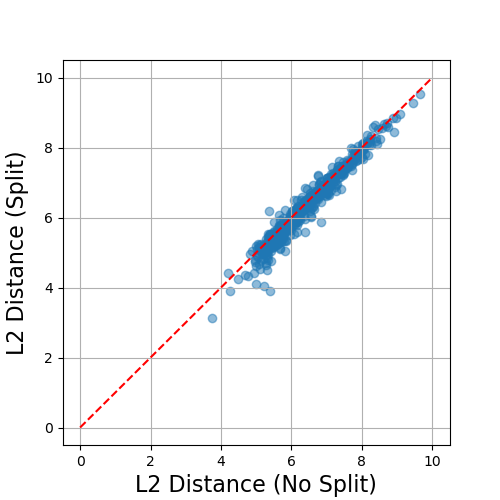}
        \caption{MNIST}
        \label{fig:haim_metric}
    \end{subfigure}
    \caption{Per-sample metric comparison for CIFAR-10 and MNIST reconstructions using \citet{haim2022}'s method. Each point corresponds to a training sample, with axes denoting metrics before and after splitting.}
\end{figure}

\begin{figure}[htbp]
    \centering
    \begin{subfigure}{0.8\textwidth}
        \centering
        \includegraphics[width=\linewidth]{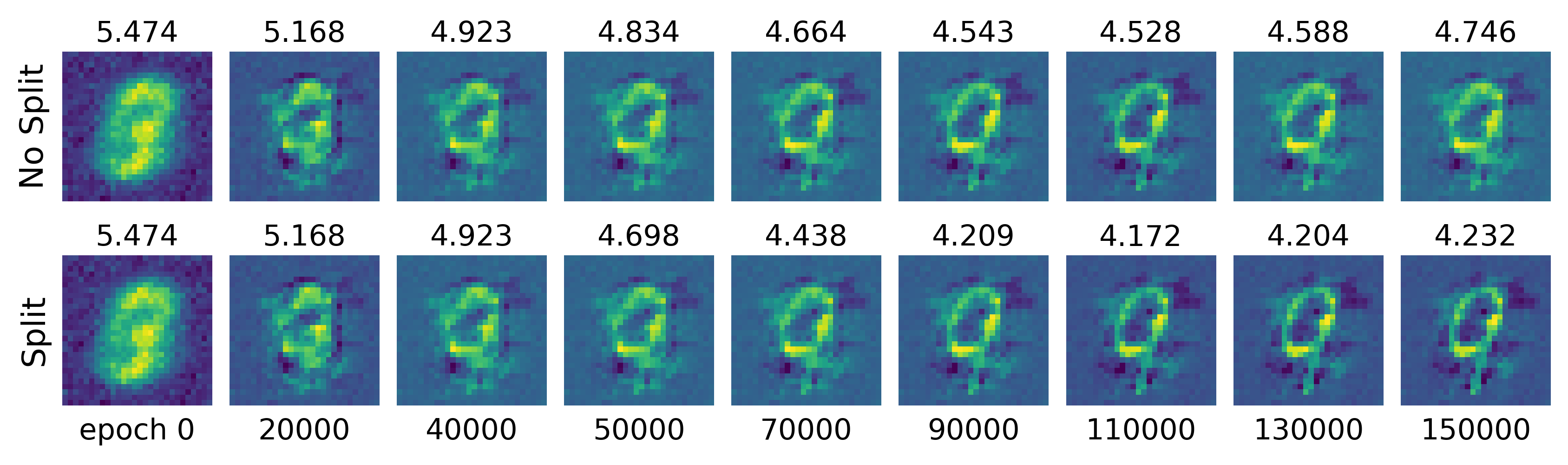}
    \end{subfigure}
    \hfill
    \begin{subfigure}{0.8\textwidth}
        \centering
        \includegraphics[width=\linewidth]{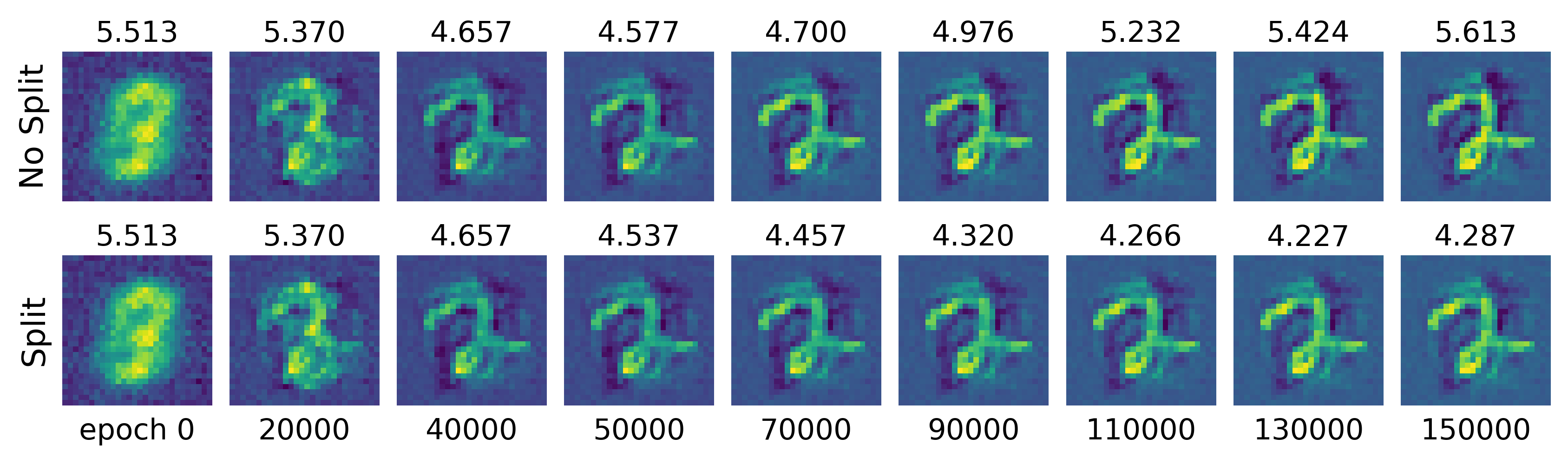}
    \end{subfigure}
    \caption{Optimization trajectories of representative MNIST samples under \citet{haim2022}'s method, illustrating metric evolution with and without sample splitting (measured by L2 distance; splitting checked every 40000 epochs).}
    \label{fig:haim_traj}
\end{figure}

\begin{figure}[htbp]
    \centering
    \begin{subfigure}{0.75\textwidth}
        \centering
        \includegraphics[width=\linewidth]{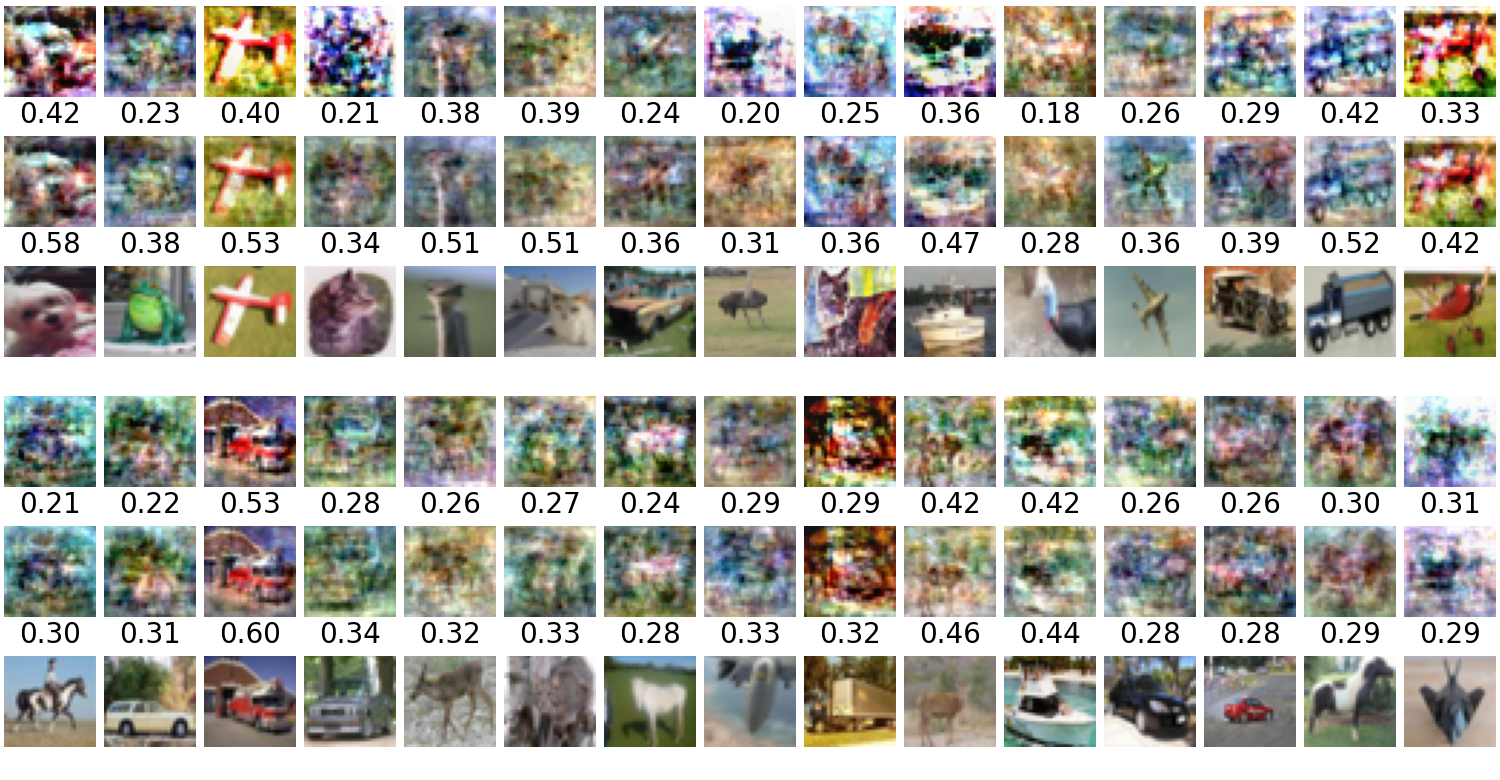}
        \caption{CIFAR-10 (500 training samples, measured by SSIM)}
        \label{fig:haim_cifar_vis}
    \end{subfigure}
    \begin{subfigure}{0.75\textwidth}
        \centering
        \includegraphics[width=\linewidth]{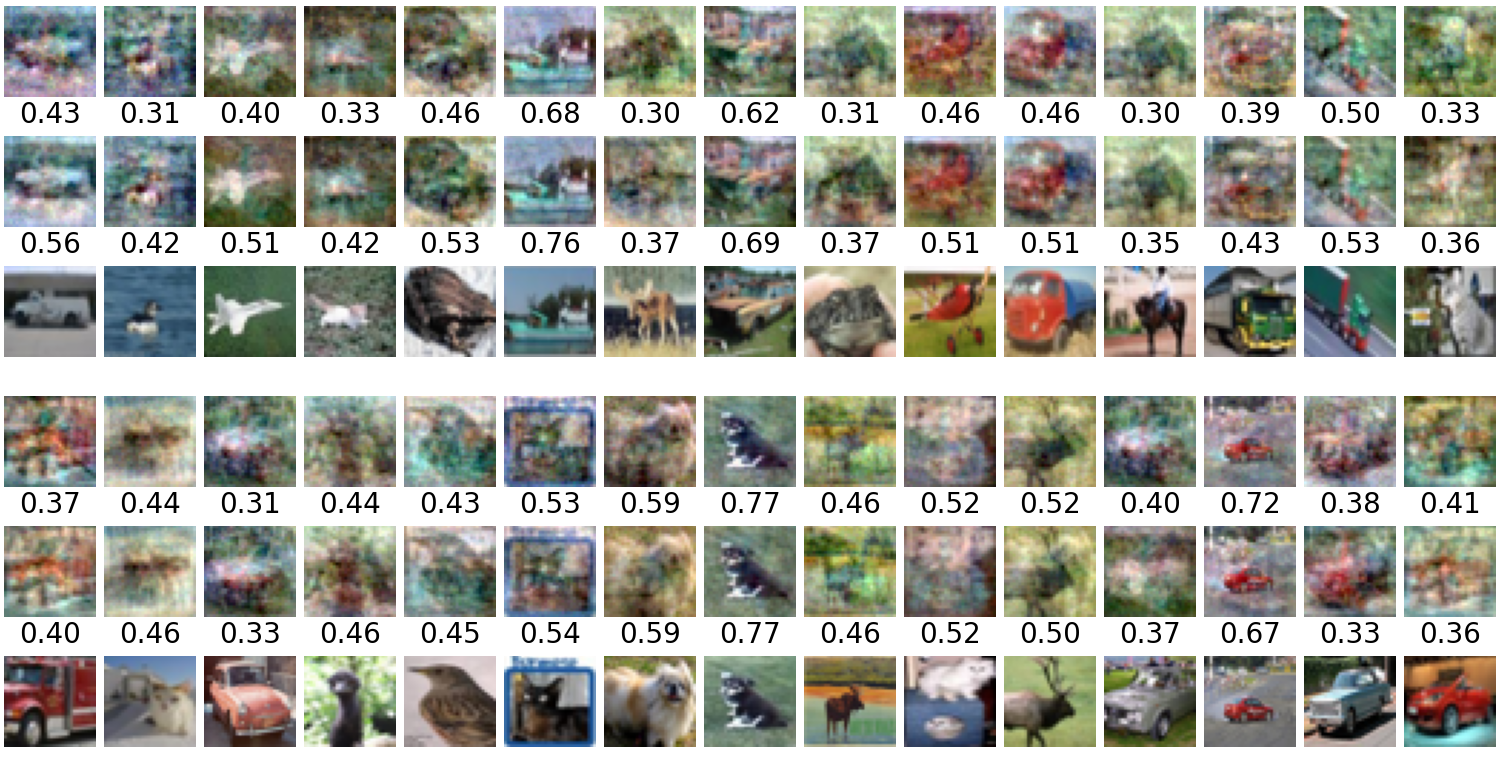}
        \caption{CIFAR-10 (100 training samples, measured by SSIM)}
        \label{fig:haim1}
    \end{subfigure}
    \begin{subfigure}{0.75\textwidth}
        \centering
        \includegraphics[width=\linewidth]{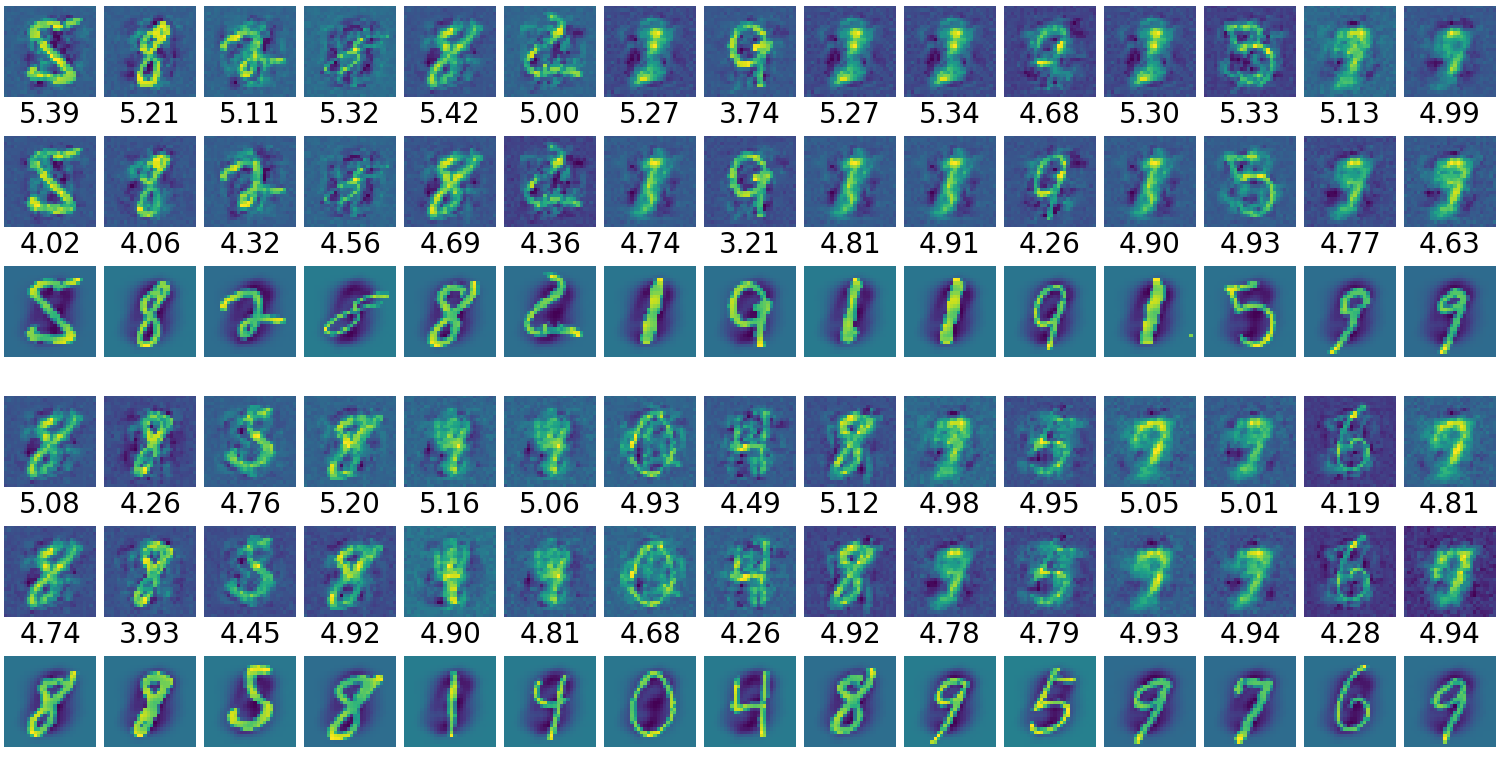}
        \caption{MNIST (500 training samples, measured by L2 distance)}
        \label{fig:haim_top30}
    \end{subfigure}
    \caption{Top 30 images reconstructed from MLP trained on 100 images using \citet{haim2022}'s (rows 1, 4), \citet{haim2022}'s with sample splitting (rows 2, 5) , and corresponding nearest neighbors from the dataset (rows 3, 6).}
\end{figure}

\subsection{Additional Results for \citet{buzaglo2024}}
We next report results for the multiclass KKT-based reconstruction method of \citet{buzaglo2024}.

\textbf{MNIST.} 
We reconstruct 10-class MNIST MLP classifiers trained on 500 samples using standard learning rates. 
Figure~\ref{fig:gon_mnist_metric} reports per-sample L2 comparisons, while Figure~\ref{fig:gon_mnist_vis} shows representative reconstructions.
Sample splitting yields improvements for samples that are already reasonably reconstructed. Importantly, splitting seldom degrades overall reconstruction quality, suggesting that it acts as a conservative refinement step. We note that more substantial gains may be achievable with careful hyperparameter tuning.

\textbf{CIFAR-10.} 
For CIFAR-10 with 500 training samples, reconstruction loss fluctuates significantly during optimization, as shown in Figure~\ref{fig:gon_cifar_loss}). This behavior is likely caused by samples repeatedly crossing class margins in the multiclass setting. In this unstable regime, splitting can occasionally produce large improvements for individual samples, although overall performance remains inconsistent.

To better isolate the effect of splitting, we also consider a smaller training set of 100 samples. In this setting, sample splitting improves reconstruction quality even when baseline performance is poor, as shown in Figures~\ref{fig: gon3}, ~\ref{fig:gon1} and~\ref{fig:gon2}.

\begin{figure}[htbp]
    \centering
     \begin{subfigure}{0.3\textwidth}
        \centering
        \includegraphics[width=\linewidth]{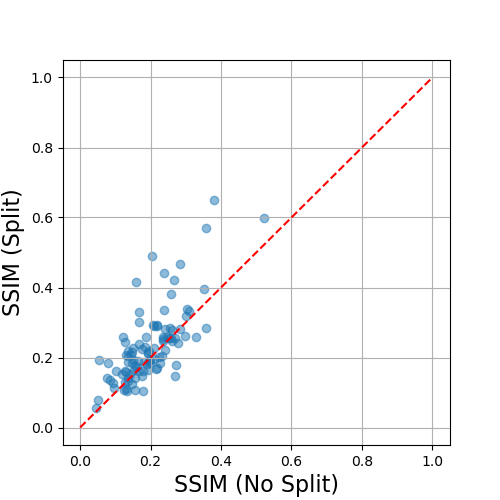}
        \caption{CIFAR-10}
        \label{fig: gon3}
    \end{subfigure}
    \begin{subfigure}{0.3\textwidth}
        \centering
        \includegraphics[width=\linewidth]{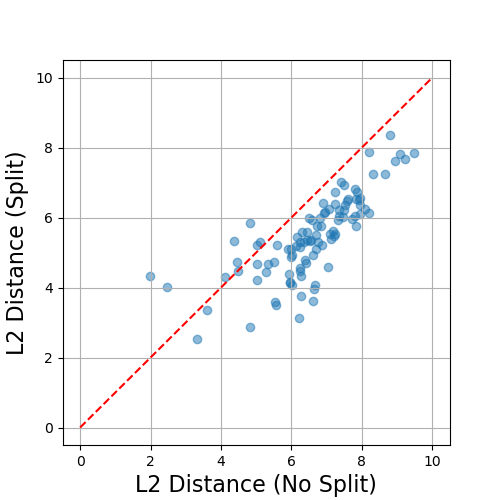}
        \caption{MNIST}
        \label{fig:gon_mnist_metric}
    \end{subfigure}
    \caption{Per-sample metric comparison for CIFAR-10 and MNIST reconstructions using \citet{buzaglo2024}'s method.}
\end{figure}

\begin{figure}[htbp]
    \centering
    \begin{subfigure}{0.6\textwidth}
        \centering
        \includegraphics[width=\linewidth]{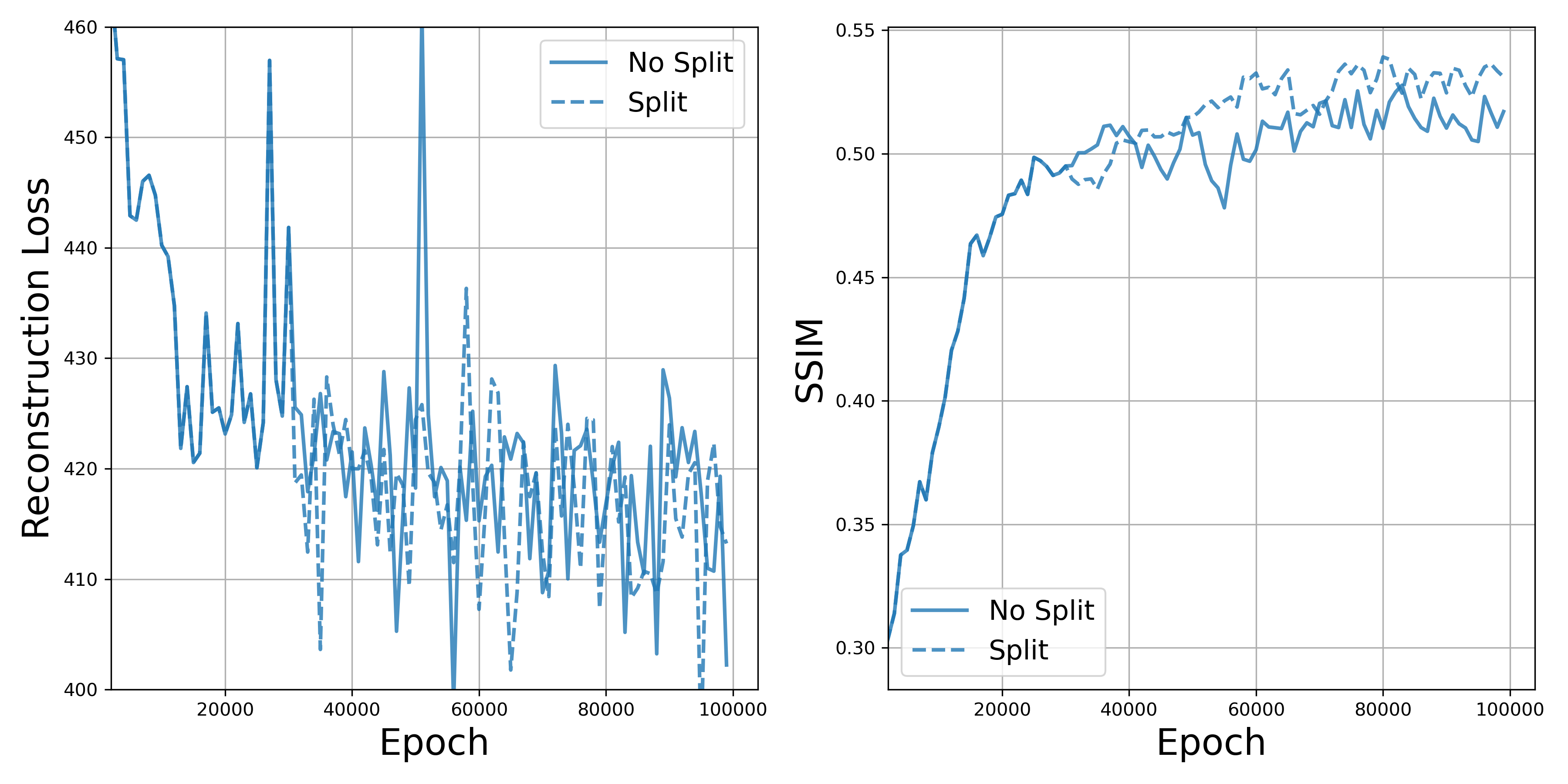}
        \caption{50 training samples per class}
        \label{fig:gon_cifar_loss}
    \end{subfigure}
    \hfill
    \begin{subfigure}{0.6\textwidth}
        \centering
        \includegraphics[width=\linewidth]{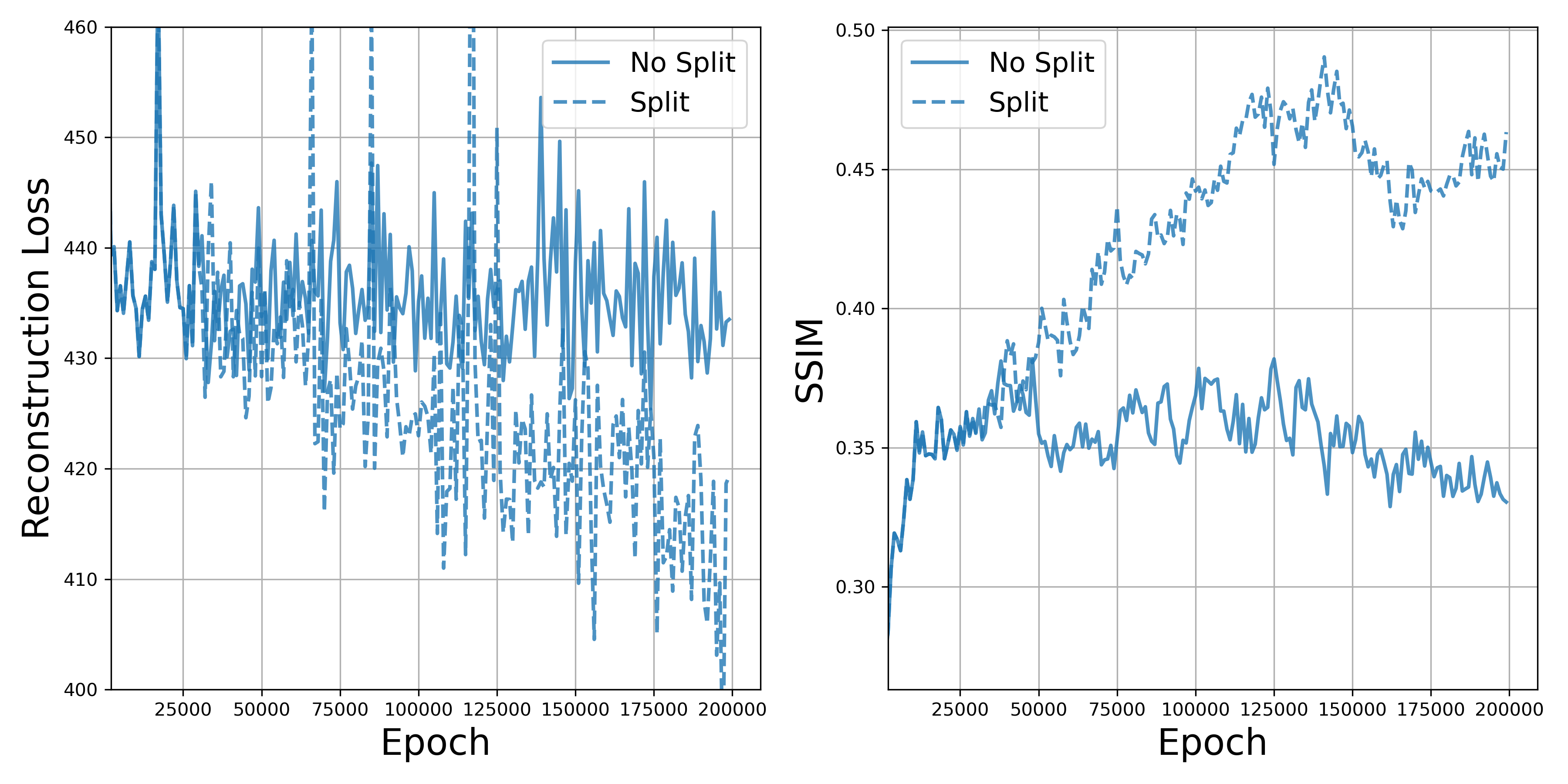}
    \caption{10 training samples per class}
    \label{fig:gon1}
    \end{subfigure}
    \caption{Reconstruction loss and metrics over optimization for CIFAR-10 using \citet{buzaglo2024}'s method, illustrating instability in the multiclass setting.}
    \label{fig: }
\end{figure}

\begin{figure}[htbp]
    \centering
    \begin{subfigure}{0.9\textwidth}
        \centering
        \includegraphics[width=\linewidth]{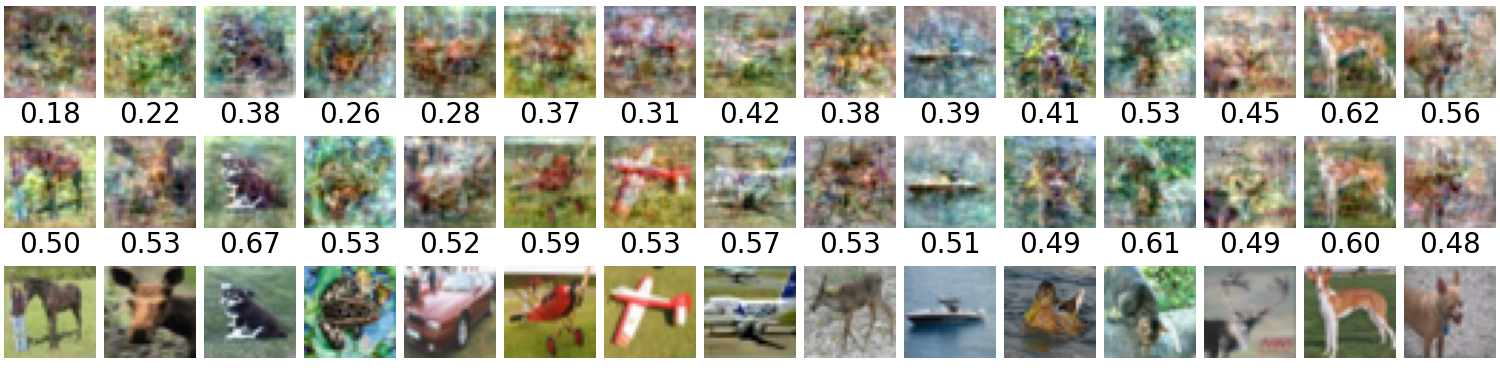}
        \caption{CIFAR-10}
        \label{fig:gon2}
    \end{subfigure}
    \hfill
    \begin{subfigure}{0.9\textwidth}
        \centering
        \includegraphics[width=\linewidth]{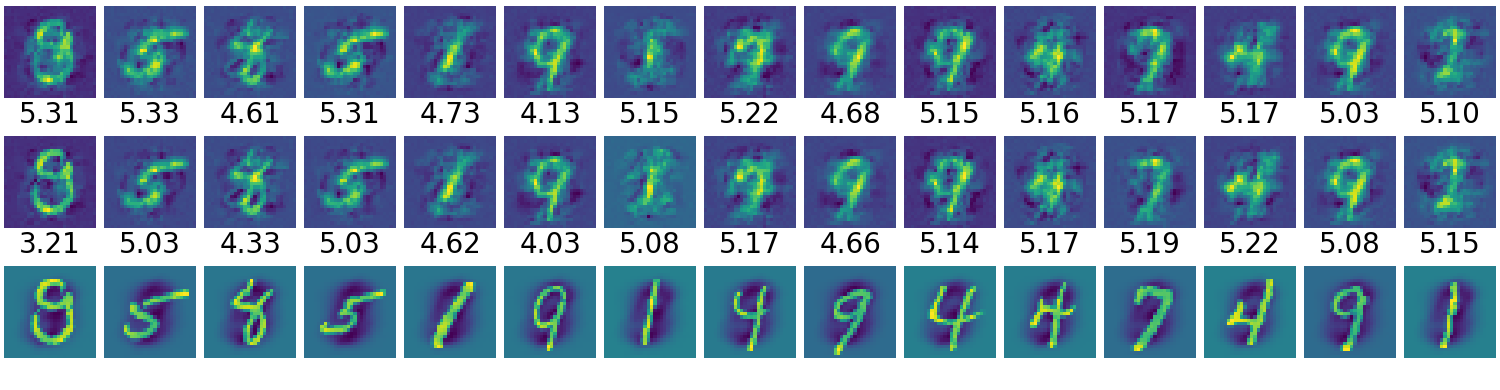}
        \caption{MNIST}
        \label{fig:gon_mnist_vis}
    \end{subfigure}
    \caption{Top 15 images reconstructed from MLP using \citet{buzaglo2024}'s (row 1), \citet{buzaglo2024}'s with sample splitting (row 2) , and corresponding nearest neighbors from the dataset (row 3).}
\end{figure}

\subsection{Additional Results for \citet{loo2024}}
Finally, we provide extended results for the NTK-based reconstruction method of \citet{loo2024}.
We consider MLPs trained on 100 samples for both MNIST and CIFAR-10 under binary classification.

 The original setting sets the initial number of reconstructed samples to $k=2n$, where $n$ is the (unknown) training set size. Since $n$ is typically unavailable in practice, we further explore the impact of varing $k$. Figure~\ref{fig:loo_loss} reports loss and metric evolution under different initial reconstruction sizes per class. When $k$ is smaller than the true training set size, the baseline method stagnates, whereas sample splitting enables further progress.

 Figures~\ref{fig:loo_metric} and~\ref{fig:loo_vis} present per-sample metric comparisons and representative reconstructions, respectively. Figure~\ref{fig:loo_traj} further shows optimization trajectories of representative samples, illustrating that the improvements triggered by splitting are robust across different initialization size choices.

\begin{figure}[htbp]
    \centering
    \begin{subfigure}{0.6\textwidth}
        \centering
        \includegraphics[width=\linewidth]{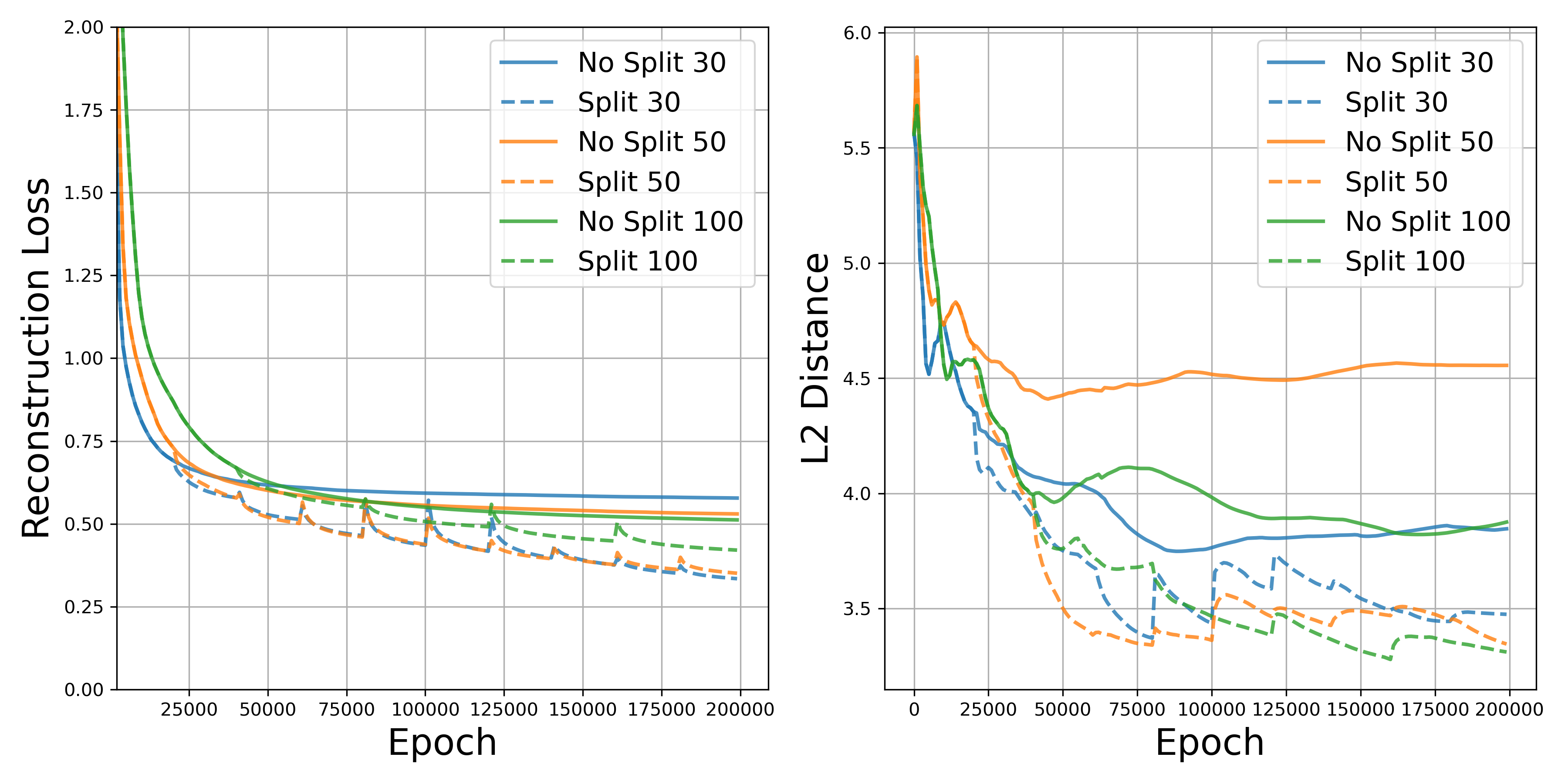}
        \caption{MNIST}
    \end{subfigure}
    \hfill
    \begin{subfigure}{0.6\textwidth}
        \centering
        \includegraphics[width=\linewidth]{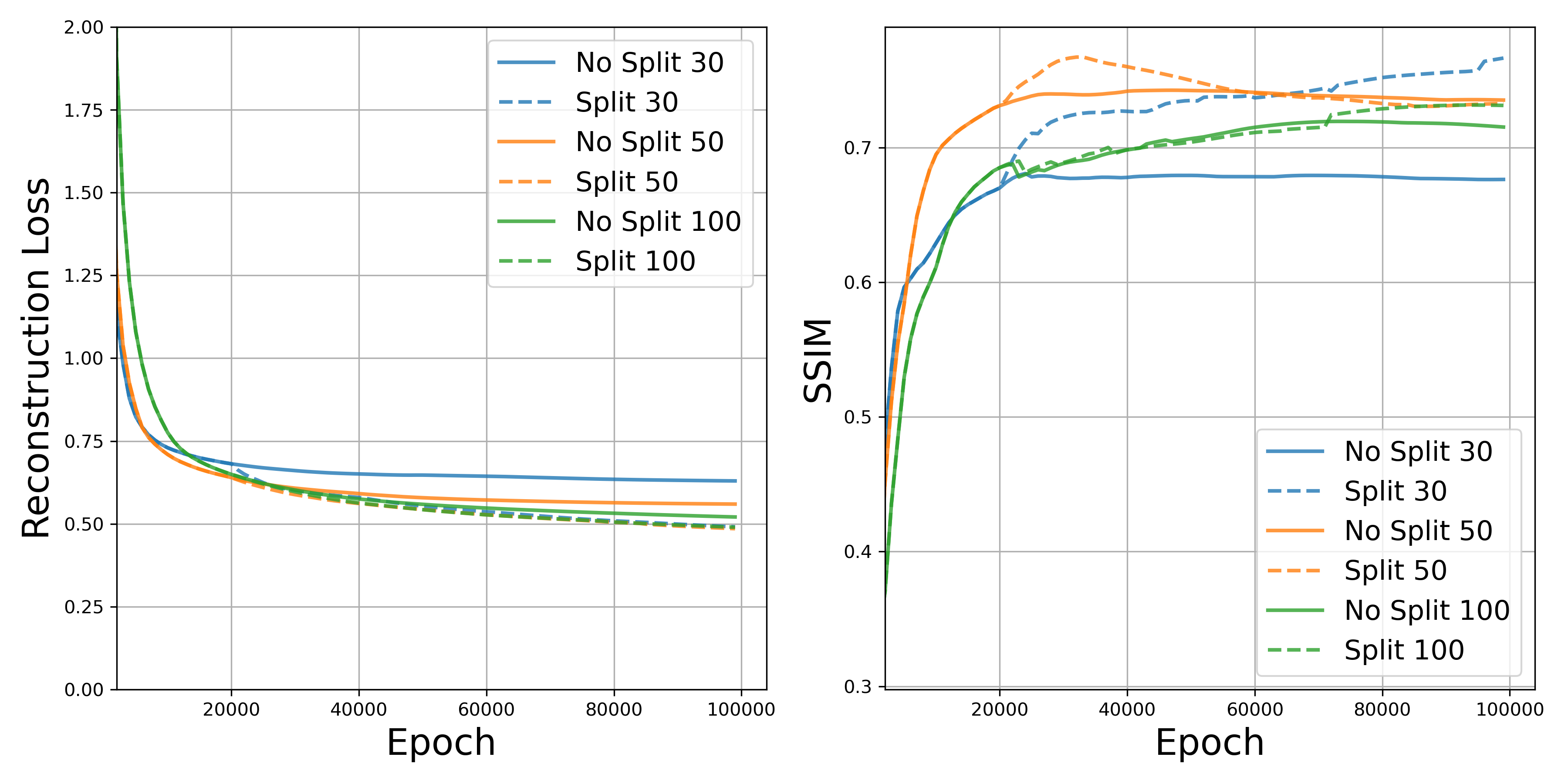}
        \caption{CIFAR-10}
    \end{subfigure}
    \caption{Loss and metric evolution for MNIST and CIFAR-10 reconstructions using \citet{loo2024}'s method under different initial reconstruction sizes per class.}
    \label{fig:loo_loss}
\end{figure}

\begin{figure}[htbp]
    \centering
     \begin{subfigure}{0.3\textwidth}
        \centering
        \includegraphics[width=\linewidth]{Figures//Loo/mnist_dots.png}
        \caption{MNIST}
    \end{subfigure}
    \begin{subfigure}{0.3\textwidth}
        \centering
        \includegraphics[width=\linewidth]{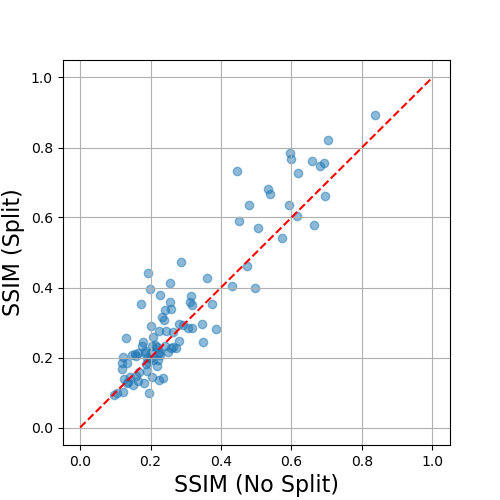}
        \caption{CIFAR-10}
    \end{subfigure}
    \caption{Per-sample metric comparison for MNIST and CIFAR-10 reconstructions using \citet{loo2024}'s method with initial reconstruction size of 30 per class.}
    \label{fig:loo_metric}
\end{figure}

\begin{figure}[htbp]
    \centering
    \begin{subfigure}{0.9\textwidth}
        \centering
        \includegraphics[width=\linewidth]{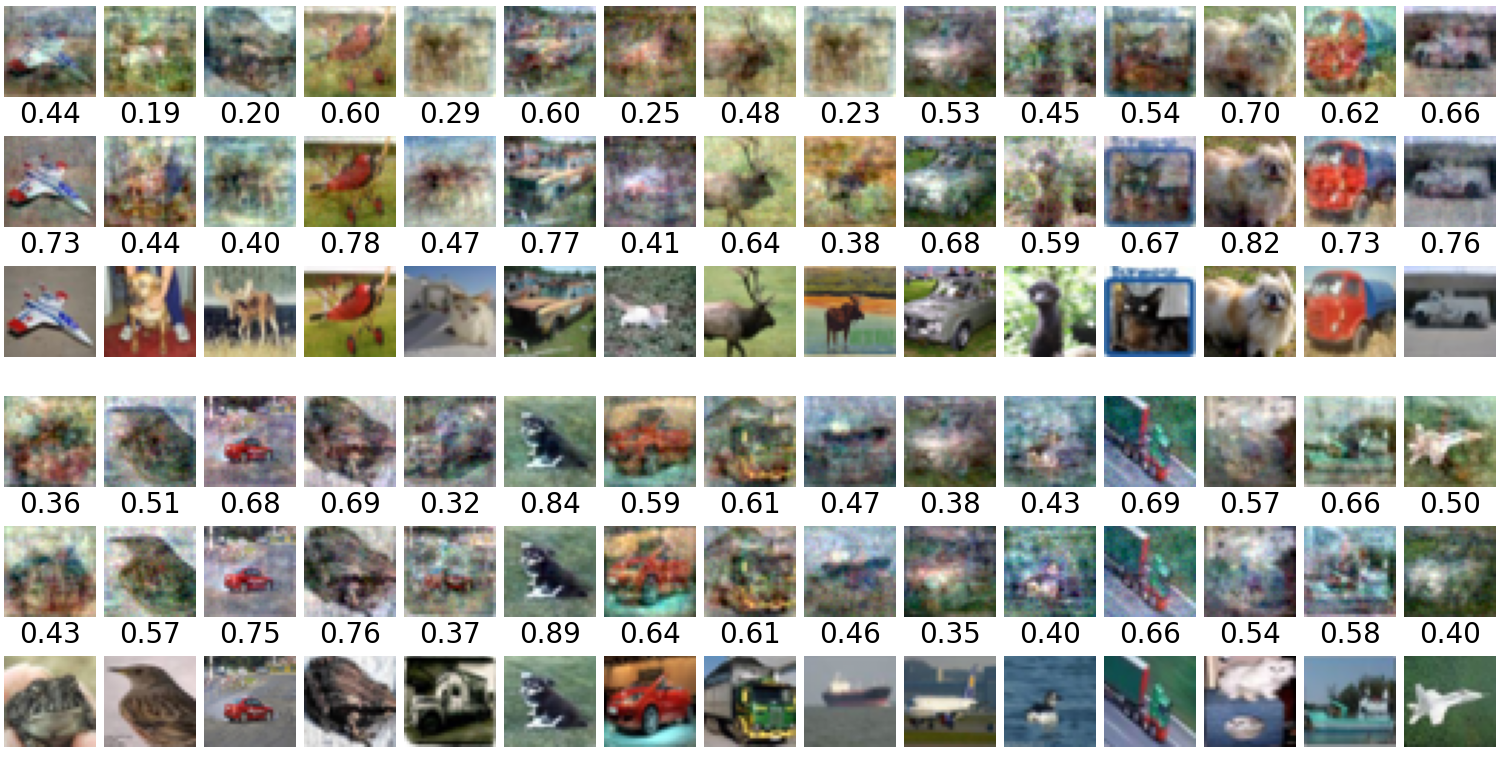}
        \caption{CIFAR-10}
    \end{subfigure}
    \hfill
    \begin{subfigure}{0.9\textwidth}
        \centering
        \includegraphics[width=\linewidth]{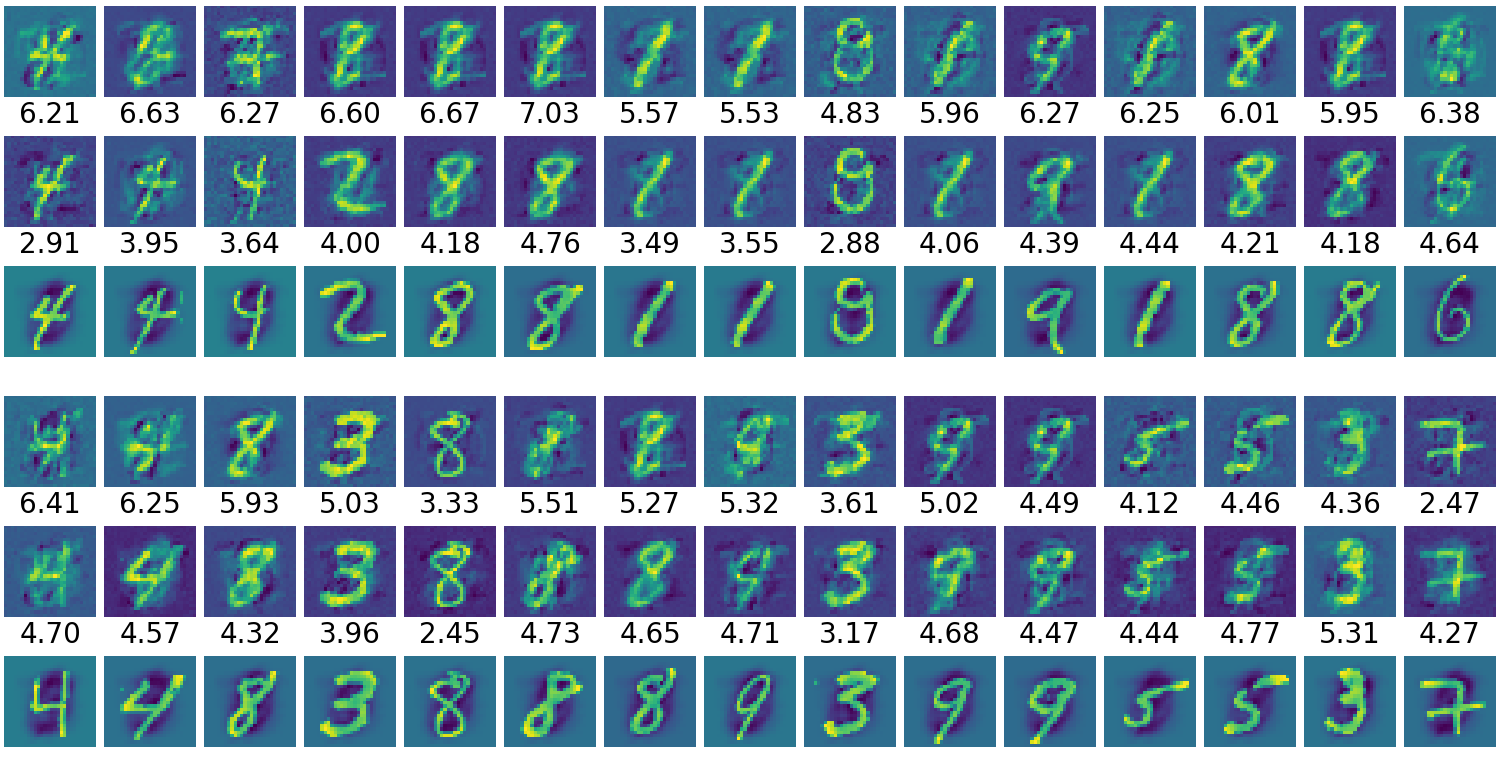}
        \caption{MNIST}
    \end{subfigure}
    \caption{Top 30 images reconstructed from MLP trained on 100 images using \citet{loo2024}'s (rows 1, 4), \citet{loo2024}'s with sample splitting (rows 2, 5) , and corresponding nearest neighbors from the dataset (rows 3, 6).}
    \label{fig:loo_vis}
\end{figure}

\begin{figure}[htbp]
    \centering
    \begin{subfigure}{0.9\textwidth}
        \centering
        \includegraphics[width=\linewidth]{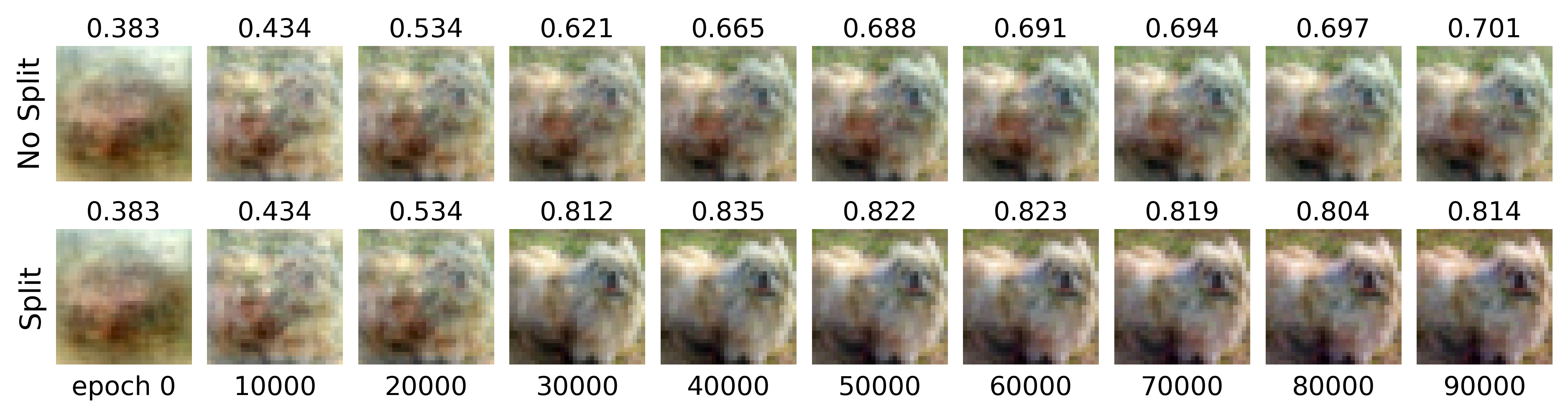}
    \end{subfigure}
    \hfill
    \begin{subfigure}{0.9\textwidth}
        \centering
        \includegraphics[width=\linewidth]{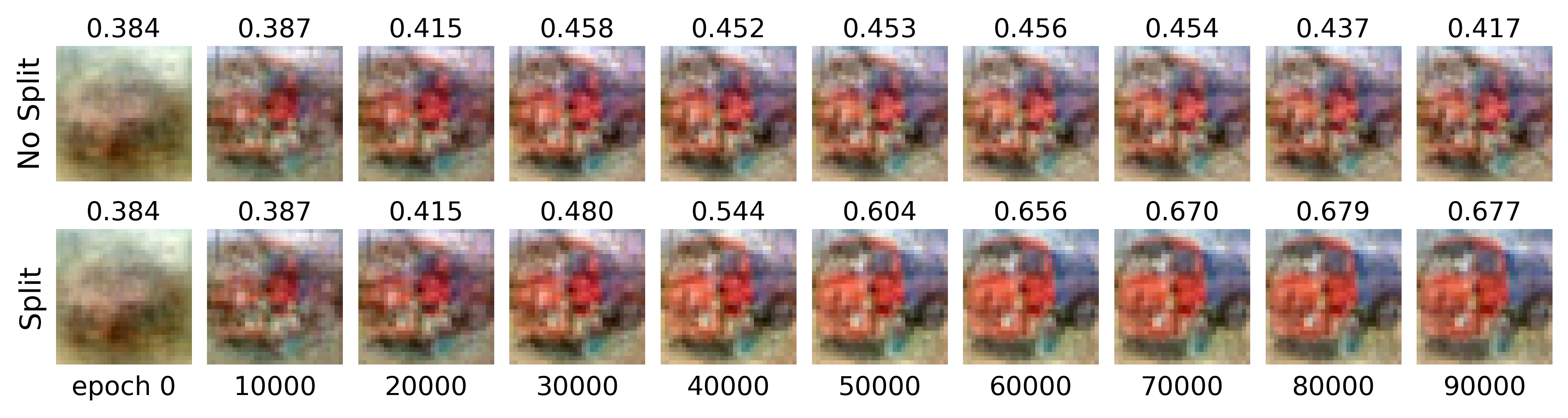}
        \caption{CIFAR-10 (measured by SSIM; splitting checked every 20000 epochs)}
    \end{subfigure}
    \hfill
    \begin{subfigure}{0.9\textwidth}
        \centering
        \includegraphics[width=\linewidth]{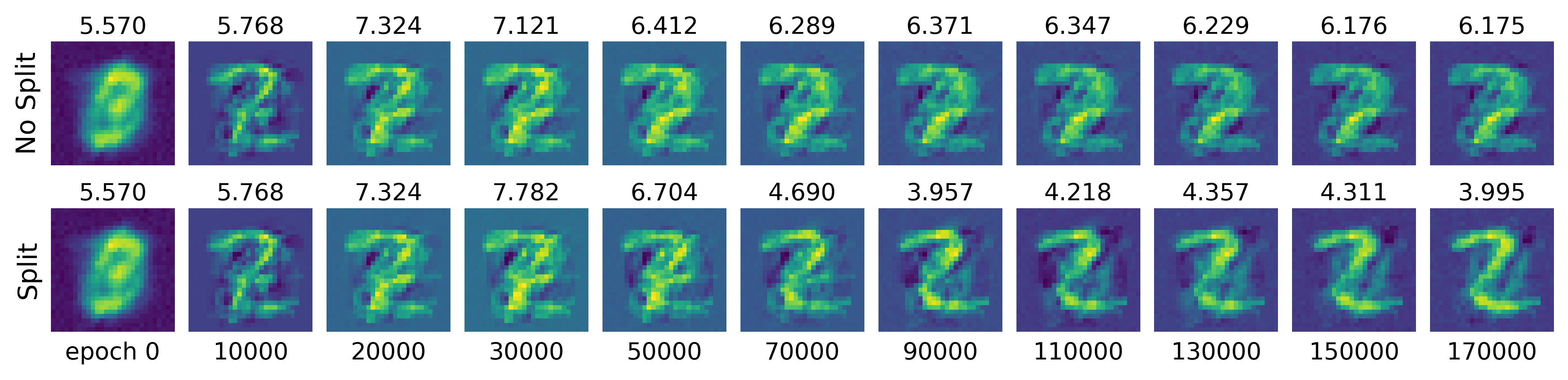}
    \end{subfigure}
    \hfill
    \begin{subfigure}{0.9\textwidth}
        \centering
        \includegraphics[width=\linewidth]{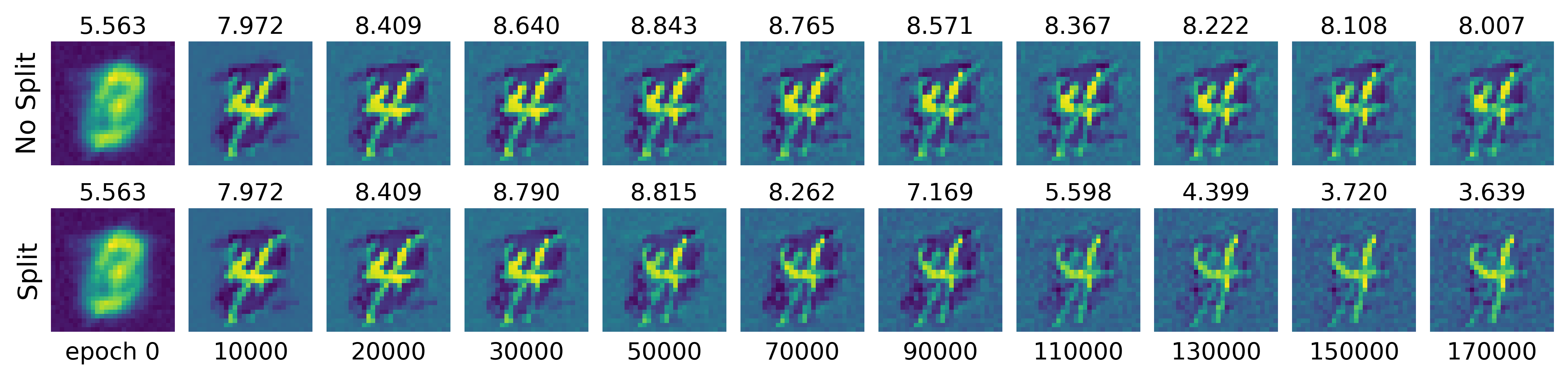}
        \caption{MNIST (measured by L2 distance; splitting checked every 20000 epochs)}
    \end{subfigure}
    \caption{Optimization trajectories of representative CIFAR-10 and MNIST samples under \citet{loo2024}'s method, illustrating the effect of sample splitting .}
    \label{fig:loo_traj}
\end{figure}

\end{document}